\long\def\comment#1{}
\newcommand{\wgrid}{w_{\text{\sf grid}}}
\newcommand{\Pn}{P^{\text{in}}}
\newcommand{\Wasserstein}{\mathsf W_2}
\newcommand{\Wassersteinp}{\mathsf W_p}
\newcommand{\nop}[1]{}
\newcommand{\mycode}[1]{{\color{blue} \begin{verbatim} #1 \end{verbatim}}}
\newcommand{\suchthat}{\ | \ }
\DeclareMathOperator*{\argmin}{arg\,min}
\newcommand{\faqw}{\text{\small \sf faqw}}
\newcommand{\fhtw}{\text{\small \sf fhtw}}
\newcommand{\wkmeans}{\text{\small \sf wkmeans}}
\newcommand{\Xset}{\bm X}
\newcommand{\kmeans}{\text{\small \sf $k$-means}}
\newcommand{\city}{\text{\sf city}}
\newcommand{\opt}{\text{\small \sf OPT}}
\newcommand{\R}{{\mathbb R}}    			
\newcommand{\dom}{\text{\small \sf Dom}}
\newcommand{\norm}[1]{\left\|#1\right\|}                                        
\newcommand{\inner}[1]{\left\langle #1 \right\rangle}
\newcommand{\faq}{\text{\small \sf FAQ}}
\newcommand{\faqs}{\text{\small \sf FAQs}}
\newcommand{\InsideOut}{\text{\small \sf InsideOut}}
\newcommand{\calE}{\mathcal E}
\newcommand{\calF}{\mathcal F}
\theoremstyle{plain}                   
\newtheorem{thm}{Theorem}[section]
\newtheorem{lmm}[thm]{Lemma}
\newtheorem{prop}[thm]{Proposition}
\newtheorem{cor}[thm]{Corollary}
\theoremstyle{definition}              
\newtheorem{opm}{Question}
\newtheorem{conj}[thm]{Conjecture}
\newtheorem{defn}[thm]{Definition}
\newtheorem{rmk}[thm]{Remark}
\newcommand{\bi}{\begin{itemize}}
\newcommand{\ei}{\end{itemize}}
\newcommand{\be}{\begin{enumerate}}
\newcommand{\ee}{\end{enumerate}}
\newcommand{\bp}{\begin{proof}}                                                 
\newcommand{\ep}{\end{proof}}                                                   
\newcommand{\bcor}{\begin{cor}}                                                 
\newcommand{\ecor}{\end{cor}}                                                   
\newcommand{\bthm}{\begin{thm}}                                                 
\newcommand{\ethm}{\end{thm}}                                                   
\newcommand{\blmm}{\begin{lmm}}                                                 
\newcommand{\elmm}{\end{lmm}}                                                   
\newcommand{\bdefn}{\begin{defn}}                                               
\newcommand{\edefn}{\end{defn}}                                                 
\newcommand{\bprop}{\begin{prop}}                                               
\newcommand{\eprop}{\end{prop}}                                                 
\newcommand{\bconj}{\begin{conj}}                                               
\newcommand{\econj}{\end{conj}}                                                 
\newcommand{\bopm}{\begin{opm}}                                                 
\newcommand{\eopm}{\end{opm}}                                                   
\newcommand{\brmk}{\begin{rmk}}                                                 
\newcommand{\ermk}{\end{rmk}}
\begin{document}
\title{Rk-means: Fast Clustering for Relational Data}

\author{Ryan Curtin}
\address{relational\underline{AI}}

\author{Ben Moseley}
\address{Carnegie Mellon University}

\author{Hung Q. Ngo}
\address{relational\underline{AI}}

\author{XuanLong Nguyen}
\address{University of Michigan}

\author{Dan Olteanu}
\address{Oxford University}

\author{Maximilian Schleich}
\address{Oxford University}

\begin{abstract}
Conventional machine learning algorithms cannot be applied until a data matrix
is available to process. When the data matrix needs to be obtained
from a relational database via a feature extraction query, the computation
cost can be prohibitive, as the data matrix may be (much) larger than the
total input relation size.  This paper introduces
Rk-means, or relational $k$-means algorithm, for clustering relational data
tuples without having to access the full data matrix. As such, we
avoid having to run the expensive feature extraction query and storing its output.  
Our algorithm leverages the underlying structures in relational data. It involves
construction of a small {\it grid coreset} of the data matrix for subsequent
cluster construction.  This gives a constant approximation for 
the $k$-means objective, while having asymptotic runtime improvements over standard 
approaches of first running the database query and then clustering. 
Empirical results show orders-of-magnitude speedup, and Rk-means can run 
faster on the database than even just computing the data matrix. 
\end{abstract}

\maketitle

\section{Introduction}
\label{sec:intro}

Clustering is an ubiquitous technique for exploratory data analysis,
whether applied to small samples or industrial-scale data.
In the latter setting, two steps are typically performed:
{\it (1) data preparation}, or extract-transform-load (ETL) operations, and {\it
  (2) clustering the extracted data}---often with a technique such
  as the popular $k$-means algorithm~\cite{cady2017data,WuKQGYMMNLYZSHS08}.
In this setting, data typically reside in a relational database, requiring
a {\it feature extraction query} (FEQ) to be performed, \emph{joining} involved relations together
to form the data matrix: each row corresponds to a data tuple and
each column a feature. Then, the data matrix is used
as input to a clustering algorithm.
Such data matrices can be expensive to compute, and may take up space
asymptotically larger than the database itself, which is made of relational tables.
Moreover, the join computation time may exceed the time it takes to 
obtain clusters.  It is not uncommon that the exploratory trip into the dataset may be stopped 
right at the gate.

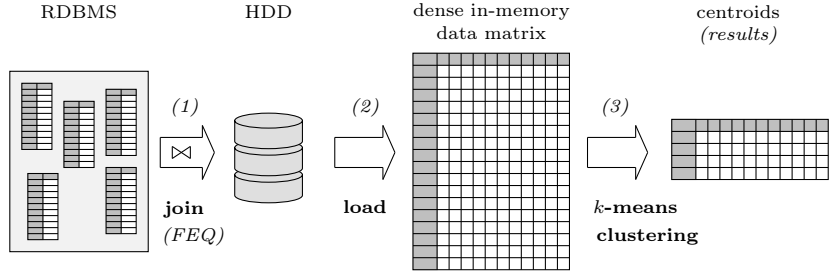
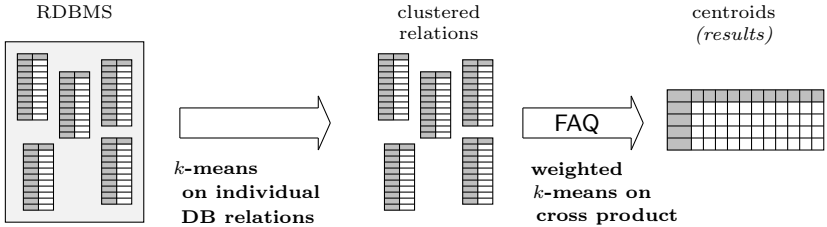
\begin{figure}[t]
\subfigure[Typical $k$-means data science workflow.  Alternate representations
can be used for the data in step \textit{(2)} for greater computational
efficiency (e.g., streaming); and, approximation strategies are known for step
\textit{(3)}.  However, the dataset often comes from an underlying database
system, and in this case the expensive FEQ join \textit{(1)} is
unavoidable.\vspace*{-0.7em}]{
\begin{tikzpicture}[scale=0.8]
\filldraw [color=lightgray!20!white] (0, 0) rectangle (2.3, 3.0);
\draw (0, 0) rectangle (2.3, 3.0);

\filldraw [color=white] (0.3, 0.2) rectangle (0.8, 1.3);
\filldraw [color=white] (0.2, 1.7) rectangle (0.7, 2.8);
\filldraw [color=white] (0.9, 1.4) rectangle (1.4, 2.5);
\filldraw [color=white] (1.6, 1.6) rectangle (2.1, 2.7);
\filldraw [color=white] (1.6, 0.3) rectangle (2.1, 1.4);

\filldraw [color=lightgray] (0.3, 1.2) rectangle (0.8, 1.3);
\filldraw [color=lightgray] (0.3, 0.2) rectangle (0.55, 1.3);

\filldraw [color=lightgray] (0.2, 2.7) rectangle (0.7, 2.8);
\filldraw [color=lightgray] (0.2, 1.7) rectangle (0.45, 2.8);

\filldraw [color=lightgray] (0.9, 2.4) rectangle (1.4, 2.5);
\filldraw [color=lightgray] (0.9, 1.4) rectangle (1.15, 2.5);

\filldraw [color=lightgray] (1.6, 2.6) rectangle (2.1, 2.7);
\filldraw [color=lightgray] (1.6, 1.6) rectangle (1.85, 2.7);

\filldraw [color=lightgray] (1.6, 1.3) rectangle (2.1, 1.4);
\filldraw [color=lightgray] (1.6, 0.3) rectangle (1.85, 1.4);

\draw (0.3, 0.2) rectangle (0.8, 1.3);
\draw (0.2, 1.7) rectangle (0.7, 2.8);
\draw (0.9, 1.4) rectangle (1.4, 2.5);
\draw (1.6, 1.6) rectangle (2.1, 2.7);
\draw (1.6, 0.3) rectangle (2.1, 1.4);

\draw (0.3, 1.2) -- (0.8, 1.2);
\draw (0.55, 0.2) -- (0.55, 1.3);
\draw (0.3, 0.3) -- (0.8, 0.3);
\draw (0.3, 0.4) -- (0.8, 0.4);
\draw (0.3, 0.5) -- (0.8, 0.5);
\draw (0.3, 0.6) -- (0.8, 0.6);
\draw (0.3, 0.7) -- (0.8, 0.7);
\draw (0.3, 0.8) -- (0.8, 0.8);
\draw (0.3, 0.9) -- (0.8, 0.9);
\draw (0.3, 1.0) -- (0.8, 1.0);
\draw (0.3, 1.1) -- (0.8, 1.1);

\draw (0.2, 2.7) -- (0.7, 2.7);
\draw (0.45, 1.7) -- (0.45, 2.8);
\draw (0.2, 2.6) -- (0.7, 2.6);
\draw (0.2, 2.5) -- (0.7, 2.5);
\draw (0.2, 2.4) -- (0.7, 2.4);
\draw (0.2, 2.3) -- (0.7, 2.3);
\draw (0.2, 2.2) -- (0.7, 2.2);
\draw (0.2, 2.1) -- (0.7, 2.1);
\draw (0.2, 2.0) -- (0.7, 2.0);
\draw (0.2, 1.9) -- (0.7, 1.9);
\draw (0.2, 1.8) -- (0.7, 1.8);

\draw (0.9, 2.4) -- (1.4, 2.4);
\draw (1.15, 1.4) -- (1.15, 2.5);
\draw (0.9, 2.3) -- (1.4, 2.3);
\draw (0.9, 2.2) -- (1.4, 2.2);
\draw (0.9, 2.1) -- (1.4, 2.1);
\draw (0.9, 2.0) -- (1.4, 2.0);
\draw (0.9, 1.9) -- (1.4, 1.9);
\draw (0.9, 1.8) -- (1.4, 1.8);
\draw (0.9, 1.7) -- (1.4, 1.7);
\draw (0.9, 1.6) -- (1.4, 1.6);
\draw (0.9, 1.5) -- (1.4, 1.5);

\draw (1.6, 2.6) -- (2.1, 2.6);
\draw (1.85, 1.6) -- (1.85, 2.7);
\draw (1.6, 2.5) -- (2.1, 2.5);
\draw (1.6, 2.4) -- (2.1, 2.4);
\draw (1.6, 2.3) -- (2.1, 2.3);
\draw (1.6, 2.2) -- (2.1, 2.2);
\draw (1.6, 2.1) -- (2.1, 2.1);
\draw (1.6, 2.0) -- (2.1, 2.0);
\draw (1.6, 1.9) -- (2.1, 1.9);
\draw (1.6, 1.8) -- (2.1, 1.8);
\draw (1.6, 1.7) -- (2.1, 1.7);

\draw (1.6, 1.3) -- (2.1, 1.3);
\draw (1.85, 0.3) -- (1.85, 1.4);
\draw (1.6, 1.2) -- (2.1, 1.2);
\draw (1.6, 1.1) -- (2.1, 1.1);
\draw (1.6, 1.0) -- (2.1, 1.0);
\draw (1.6, 0.9) -- (2.1, 0.9);
\draw (1.6, 0.8) -- (2.1, 0.8);
\draw (1.6, 0.7) -- (2.1, 0.7);
\draw (1.6, 0.6) -- (2.1, 0.6);
\draw (1.6, 0.5) -- (2.1, 0.5);
\draw (1.6, 0.4) -- (2.1, 0.4);


\node[cylinder,
      cylinder uses custom fill,
      cylinder body fill=lightgray!50,
      cylinder end fill=lightgray!50,
      shape border rotate=90,
      aspect=0.85,
      minimum height=0.4cm,
      minimum width=0.9cm,
      draw] (disk) at (4.3, 1.0) {};
\node[cylinder,
      cylinder uses custom fill,
      cylinder body fill=lightgray!50,
      cylinder end fill=lightgray!50,
      shape border rotate=90,
      aspect=0.85,
      minimum height=0.4cm,
      minimum width=0.9cm,
      draw] (disk2) at (4.3, 1.45) {};
\node[cylinder,
      cylinder uses custom fill,
      cylinder body fill=lightgray!50,
      cylinder end fill=lightgray!50,
      shape border rotate=90,
      aspect=0.85,
      minimum height=0.25cm,
      minimum width=0.9cm,
      draw] (disk3) at (4.3, 1.9) {};



\filldraw [color=lightgray] (6.7, 3.1) rectangle (9.3, 3.3);
\filldraw [color=lightgray] (6.7, -0.3) rectangle (7.1, 3.3);
\draw (6.7, -0.3) rectangle (9.3, 3.3);
\draw (6.7, 3.1) -- (9.3, 3.1);
\draw (7.1, -0.3) -- (7.1, 3.3);
\draw (6.7, 2.9) -- (9.3, 2.9);
\draw (6.7, 2.7) -- (9.3, 2.7);
\draw (6.7, 2.5) -- (9.3, 2.5);
\draw (6.7, 2.3) -- (9.3, 2.3);
\draw (6.7, 2.1) -- (9.3, 2.1);
\draw (6.7, 1.9) -- (9.3, 1.9);
\draw (6.7, 1.7) -- (9.3, 1.7);
\draw (6.7, 1.5) -- (9.3, 1.5);
\draw (6.7, 1.3) -- (9.3, 1.3);
\draw (6.7, 1.1) -- (9.3, 1.1);
\draw (6.7, 0.9) -- (9.3, 0.9);
\draw (6.7, 0.7) -- (9.3, 0.7);
\draw (6.7, 0.5) -- (9.3, 0.5);
\draw (6.7, 0.3) -- (9.3, 0.3);
\draw (6.7, 0.1) -- (9.3, 0.1);
\draw (6.7, -0.1) -- (9.3, -0.1);

\draw (7.3, -0.3) -- (7.3, 3.3);
\draw (7.5, -0.3) -- (7.5, 3.3);
\draw (7.7, -0.3) -- (7.7, 3.3);
\draw (7.9, -0.3) -- (7.9, 3.3);
\draw (8.1, -0.3) -- (8.1, 3.3);
\draw (8.3, -0.3) -- (8.3, 3.3);
\draw (8.5, -0.3) -- (8.5, 3.3);
\draw (8.7, -0.3) -- (8.7, 3.3);
\draw (8.9, -0.3) -- (8.9, 3.3);
\draw (9.1, -0.3) -- (9.1, 3.3);


\filldraw [color=lightgray] (11.0, 2.0) rectangle (13.6, 2.2);
\filldraw [color=lightgray] (11.0, 1.2) rectangle (11.4, 2.2);

\draw (11.0, 1.2) rectangle (13.6, 2.2);

\draw (11.4, 1.2) -- (11.4, 2.2);
\draw (11.0, 2.0) -- (13.6, 2.0);

\draw (11.0, 1.8) -- (13.6, 1.8);
\draw (11.0, 1.6) -- (13.6, 1.6);
\draw (11.0, 1.4) -- (13.6, 1.4);

\draw (11.6, 1.2) -- (11.6, 2.2);
\draw (11.8, 1.2) -- (11.8, 2.2);
\draw (12.0, 1.2) -- (12.0, 2.2);
\draw (12.2, 1.2) -- (12.2, 2.2);
\draw (12.4, 1.2) -- (12.4, 2.2);
\draw (12.6, 1.2) -- (12.6, 2.2);
\draw (12.8, 1.2) -- (12.8, 2.2);
\draw (13.0, 1.2) -- (13.0, 2.2);
\draw (13.2, 1.2) -- (13.2, 2.2);
\draw (13.4, 1.2) -- (13.4, 2.2);


\node at (1.15, 4.0) {\scriptsize{RDBMS}};
\node at (4.3, 4.0) {\scriptsize{HDD}};
\node at (8.0, 4.0) {\scriptsize{dense in-memory}};
\node at (8.0, 3.65) {\scriptsize{data matrix}};
\node at (12.1, 4.0) {\scriptsize{centroids}};
\node at (12.1, 3.6) {\scriptsize{\it (results)}};


\draw (2.5, 1.9) -- (3.2, 1.9);
\draw (3.2, 1.9) -- (3.2, 2.1);
\draw (3.2, 2.1) -- (3.4, 1.65);
\draw (3.4, 1.65) -- (3.2, 1.2);
\draw (3.2, 1.2) -- (3.2, 1.4);
\draw (2.5, 1.4) -- (3.2, 1.4);
\draw (2.5, 1.4) -- (2.5, 1.9);
\draw (2.7, 1.55) -- (2.7, 1.75);
\draw (2.7, 1.55) -- (3.0, 1.75);
\draw (2.7, 1.75) -- (3.0, 1.55);
\draw (3.0, 1.55) -- (3.0, 1.75);

\draw (5.4, 1.9) -- (6.2, 1.9);
\draw (6.2, 1.9) -- (6.2, 2.1);
\draw (6.2, 2.1) -- (6.4, 1.65);
\draw (6.4, 1.65) -- (6.2, 1.2);
\draw (6.2, 1.2) -- (6.2, 1.4);
\draw (6.2, 1.4) -- (5.4, 1.4);
\draw (5.4, 1.4) -- (5.4, 1.9);

\draw (9.6, 1.9) -- (10.4, 1.9);
\draw (10.4, 1.9) -- (10.4, 2.1);
\draw (10.4, 2.1) -- (10.6, 1.65);
\draw (10.6, 1.65) -- (10.4, 1.2);
\draw (10.4, 1.2) -- (10.4, 1.4);
\draw (10.4, 1.4) -- (9.6, 1.4);
\draw (9.6, 1.4) -- (9.6, 1.9);

\node at (2.9, 0.7) {\scriptsize{\bf join}};
\node at (3.05, 0.25) {\scriptsize{\textbf{\it (FEQ)}}};

\node at (5.9, 0.75) {\scriptsize{\bf load}};

\node at (10.4, 0.75) {\scriptsize{\bf $k$-means}};
\node at (10.6, 0.25) {\scriptsize{\bf clustering}};

\node at (2.9, 2.4) {\scriptsize{\it (1)}};
\node at (5.9, 2.4) {\scriptsize{\it (2)}};
\node at (10.05, 2.4) {\scriptsize{\it (3)}};
\end{tikzpicture}
}
\subfigure[The Rk-means data science workflow.  We avoid ever computing the
expensive FEQ by instead clustering each underlying relation (steps
1 and 2, Section \ref{sec:structures}); we then use \faqs\ for efficient
weighted $k$-means of the cross-product of those relations (steps 3
and 4, Section \ref{sec:structures}).  This gives significant empirical and
theoretical accelerations, and bounded approximation---{\it without ever
computing the full data matrix.}]{
\begin{tikzpicture}[scale=0.8]
\filldraw [color=lightgray!20!white] (0, 0) rectangle (2.3, 3.0);
\draw (0, 0) rectangle (2.3, 3.0);

\filldraw [color=white] (0.3, 0.2) rectangle (0.8, 1.3);
\filldraw [color=white] (0.2, 1.7) rectangle (0.7, 2.8);
\filldraw [color=white] (0.9, 1.4) rectangle (1.4, 2.5);
\filldraw [color=white] (1.6, 1.6) rectangle (2.1, 2.7);
\filldraw [color=white] (1.6, 0.3) rectangle (2.1, 1.4);

\filldraw [color=lightgray] (0.3, 1.2) rectangle (0.8, 1.3);
\filldraw [color=lightgray] (0.3, 0.2) rectangle (0.55, 1.3);

\filldraw [color=lightgray] (0.2, 2.7) rectangle (0.7, 2.8);
\filldraw [color=lightgray] (0.2, 1.7) rectangle (0.45, 2.8);

\filldraw [color=lightgray] (0.9, 2.4) rectangle (1.4, 2.5);
\filldraw [color=lightgray] (0.9, 1.4) rectangle (1.15, 2.5);

\filldraw [color=lightgray] (1.6, 2.6) rectangle (2.1, 2.7);
\filldraw [color=lightgray] (1.6, 1.6) rectangle (1.85, 2.7);

\filldraw [color=lightgray] (1.6, 1.3) rectangle (2.1, 1.4);
\filldraw [color=lightgray] (1.6, 0.3) rectangle (1.85, 1.4);

\draw (0.3, 0.2) rectangle (0.8, 1.3);
\draw (0.2, 1.7) rectangle (0.7, 2.8);
\draw (0.9, 1.4) rectangle (1.4, 2.5);
\draw (1.6, 1.6) rectangle (2.1, 2.7);
\draw (1.6, 0.3) rectangle (2.1, 1.4);

\draw (0.3, 1.2) -- (0.8, 1.2);
\draw (0.55, 0.2) -- (0.55, 1.3);
\draw (0.3, 0.3) -- (0.8, 0.3);
\draw (0.3, 0.4) -- (0.8, 0.4);
\draw (0.3, 0.5) -- (0.8, 0.5);
\draw (0.3, 0.6) -- (0.8, 0.6);
\draw (0.3, 0.7) -- (0.8, 0.7);
\draw (0.3, 0.8) -- (0.8, 0.8);
\draw (0.3, 0.9) -- (0.8, 0.9);
\draw (0.3, 1.0) -- (0.8, 1.0);
\draw (0.3, 1.1) -- (0.8, 1.1);

\draw (0.2, 2.7) -- (0.7, 2.7);
\draw (0.45, 1.7) -- (0.45, 2.8);
\draw (0.2, 2.6) -- (0.7, 2.6);
\draw (0.2, 2.5) -- (0.7, 2.5);
\draw (0.2, 2.4) -- (0.7, 2.4);
\draw (0.2, 2.3) -- (0.7, 2.3);
\draw (0.2, 2.2) -- (0.7, 2.2);
\draw (0.2, 2.1) -- (0.7, 2.1);
\draw (0.2, 2.0) -- (0.7, 2.0);
\draw (0.2, 1.9) -- (0.7, 1.9);
\draw (0.2, 1.8) -- (0.7, 1.8);

\draw (0.9, 2.4) -- (1.4, 2.4);
\draw (1.15, 1.4) -- (1.15, 2.5);
\draw (0.9, 2.3) -- (1.4, 2.3);
\draw (0.9, 2.2) -- (1.4, 2.2);
\draw (0.9, 2.1) -- (1.4, 2.1);
\draw (0.9, 2.0) -- (1.4, 2.0);
\draw (0.9, 1.9) -- (1.4, 1.9);
\draw (0.9, 1.8) -- (1.4, 1.8);
\draw (0.9, 1.7) -- (1.4, 1.7);
\draw (0.9, 1.6) -- (1.4, 1.6);
\draw (0.9, 1.5) -- (1.4, 1.5);

\draw (1.6, 2.6) -- (2.1, 2.6);
\draw (1.85, 1.6) -- (1.85, 2.7);
\draw (1.6, 2.5) -- (2.1, 2.5);
\draw (1.6, 2.4) -- (2.1, 2.4);
\draw (1.6, 2.3) -- (2.1, 2.3);
\draw (1.6, 2.2) -- (2.1, 2.2);
\draw (1.6, 2.1) -- (2.1, 2.1);
\draw (1.6, 2.0) -- (2.1, 2.0);
\draw (1.6, 1.9) -- (2.1, 1.9);
\draw (1.6, 1.8) -- (2.1, 1.8);
\draw (1.6, 1.7) -- (2.1, 1.7);

\draw (1.6, 1.3) -- (2.1, 1.3);
\draw (1.85, 0.3) -- (1.85, 1.4);
\draw (1.6, 1.2) -- (2.1, 1.2);
\draw (1.6, 1.1) -- (2.1, 1.1);
\draw (1.6, 1.0) -- (2.1, 1.0);
\draw (1.6, 0.9) -- (2.1, 0.9);
\draw (1.6, 0.8) -- (2.1, 0.8);
\draw (1.6, 0.7) -- (2.1, 0.7);
\draw (1.6, 0.6) -- (2.1, 0.6);
\draw (1.6, 0.5) -- (2.1, 0.5);
\draw (1.6, 0.4) -- (2.1, 0.4);


\filldraw [color=white] (6.3, 0.2) rectangle (6.8, 1.3);
\filldraw [color=white] (6.2, 1.7) rectangle (6.7, 2.8);
\filldraw [color=white] (6.9, 1.4) rectangle (7.4, 2.5);
\filldraw [color=white] (7.6, 1.6) rectangle (8.1, 2.7);
\filldraw [color=white] (7.6, 0.3) rectangle (8.1, 1.4);

\filldraw [color=lightgray] (6.3, 1.2) rectangle (6.8, 1.3);
\filldraw [color=lightgray] (6.3, 0.2) rectangle (6.55, 1.3);

\filldraw [color=lightgray] (6.2, 2.7) rectangle (6.7, 2.8);
\filldraw [color=lightgray] (6.2, 1.7) rectangle (6.45, 2.8);

\filldraw [color=lightgray] (6.9, 2.4) rectangle (7.4, 2.5);
\filldraw [color=lightgray] (6.9, 1.4) rectangle (7.15, 2.5);

\filldraw [color=lightgray] (7.6, 2.6) rectangle (8.1, 2.7);
\filldraw [color=lightgray] (7.6, 1.6) rectangle (7.85, 2.7);

\filldraw [color=lightgray] (7.6, 1.3) rectangle (8.1, 1.4);
\filldraw [color=lightgray] (7.6, 0.3) rectangle (7.85, 1.4);

\draw (6.3, 0.2) rectangle (6.8, 1.3);
\draw (6.2, 1.7) rectangle (6.7, 2.8);
\draw (6.9, 1.4) rectangle (7.4, 2.5);
\draw (7.6, 1.6) rectangle (8.1, 2.7);
\draw (7.6, 0.3) rectangle (8.1, 1.4);

\draw (6.3, 1.2) -- (6.8, 1.2);
\draw (6.55, 0.2) -- (6.55, 1.3);
\draw (6.3, 0.3) -- (6.8, 0.3);
\draw (6.3, 0.4) -- (6.8, 0.4);
\draw (6.3, 0.5) -- (6.8, 0.5);
\draw (6.3, 0.6) -- (6.8, 0.6);
\draw (6.3, 0.7) -- (6.8, 0.7);
\draw (6.3, 0.8) -- (6.8, 0.8);
\draw (6.3, 0.9) -- (6.8, 0.9);
\draw (6.3, 1.0) -- (6.8, 1.0);
\draw (6.3, 1.1) -- (6.8, 1.1);

\draw (6.2, 2.7) -- (6.7, 2.7);
\draw (6.45, 1.7) -- (6.45, 2.8);
\draw (6.2, 2.6) -- (6.7, 2.6);
\draw (6.2, 2.5) -- (6.7, 2.5);
\draw (6.2, 2.4) -- (6.7, 2.4);
\draw (6.2, 2.3) -- (6.7, 2.3);
\draw (6.2, 2.2) -- (6.7, 2.2);
\draw (6.2, 2.1) -- (6.7, 2.1);
\draw (6.2, 2.0) -- (6.7, 2.0);
\draw (6.2, 1.9) -- (6.7, 1.9);
\draw (6.2, 1.8) -- (6.7, 1.8);

\draw (6.9, 2.4) -- (7.4, 2.4);
\draw (7.15, 1.4) -- (7.15, 2.5);
\draw (6.9, 2.3) -- (7.4, 2.3);
\draw (6.9, 2.2) -- (7.4, 2.2);
\draw (6.9, 2.1) -- (7.4, 2.1);
\draw (6.9, 2.0) -- (7.4, 2.0);
\draw (6.9, 1.9) -- (7.4, 1.9);
\draw (6.9, 1.8) -- (7.4, 1.8);
\draw (6.9, 1.7) -- (7.4, 1.7);
\draw (6.9, 1.6) -- (7.4, 1.6);
\draw (6.9, 1.5) -- (7.4, 1.5);

\draw (7.6, 2.6) -- (8.1, 2.6);
\draw (7.85, 1.6) -- (7.85, 2.7);
\draw (7.6, 2.5) -- (8.1, 2.5);
\draw (7.6, 2.4) -- (8.1, 2.4);
\draw (7.6, 2.3) -- (8.1, 2.3);
\draw (7.6, 2.2) -- (8.1, 2.2);
\draw (7.6, 2.1) -- (8.1, 2.1);
\draw (7.6, 2.0) -- (8.1, 2.0);
\draw (7.6, 1.9) -- (8.1, 1.9);
\draw (7.6, 1.8) -- (8.1, 1.8);
\draw (7.6, 1.7) -- (8.1, 1.7);

\draw (7.6, 1.3) -- (8.1, 1.3);
\draw (7.85, 0.3) -- (7.85, 1.4);
\draw (7.6, 1.2) -- (8.1, 1.2);
\draw (7.6, 1.1) -- (8.1, 1.1);
\draw (7.6, 1.0) -- (8.1, 1.0);
\draw (7.6, 0.9) -- (8.1, 0.9);
\draw (7.6, 0.8) -- (8.1, 0.8);
\draw (7.6, 0.7) -- (8.1, 0.7);
\draw (7.6, 0.6) -- (8.1, 0.6);
\draw (7.6, 0.5) -- (8.1, 0.5);
\draw (7.6, 0.4) -- (8.1, 0.4);


\filldraw [color=lightgray] (11.0, 2.0) rectangle (13.6, 2.2);
\filldraw [color=lightgray] (11.0, 1.2) rectangle (11.4, 2.2);

\draw (11.0, 1.2) rectangle (13.6, 2.2);

\draw (11.4, 1.2) -- (11.4, 2.2);
\draw (11.0, 2.0) -- (13.6, 2.0);

\draw (11.0, 1.8) -- (13.6, 1.8);
\draw (11.0, 1.6) -- (13.6, 1.6);
\draw (11.0, 1.4) -- (13.6, 1.4);

\draw (11.6, 1.2) -- (11.6, 2.2);
\draw (11.8, 1.2) -- (11.8, 2.2);
\draw (12.0, 1.2) -- (12.0, 2.2);
\draw (12.2, 1.2) -- (12.2, 2.2);
\draw (12.4, 1.2) -- (12.4, 2.2);
\draw (12.6, 1.2) -- (12.6, 2.2);
\draw (12.8, 1.2) -- (12.8, 2.2);
\draw (13.0, 1.2) -- (13.0, 2.2);
\draw (13.2, 1.2) -- (13.2, 2.2);
\draw (13.4, 1.2) -- (13.4, 2.2);


\node at (1.15, 3.5) {\scriptsize{RDBMS}};
\node at (7.2, 3.5) {\scriptsize{clustered}};
\node at (7.2, 3.15) {\scriptsize{relations}};
\node at (12.1, 3.5) {\scriptsize{centroids}};
\node at (12.1, 3.1) {\scriptsize{\it (results)}};


\draw (2.9, 1.9) -- (5.2, 1.9);
\draw (5.2, 1.9) -- (5.2, 2.1);
\draw (5.2, 2.1) -- (5.4, 1.65);
\draw (5.4, 1.65) -- (5.2, 1.2);
\draw (5.2, 1.2) -- (5.2, 1.4);
\draw (2.9, 1.4) -- (5.2, 1.4);
\draw (2.9, 1.4) -- (2.9, 1.9);

\draw (8.6, 1.9) -- (10.4, 1.9);
\draw (10.4, 1.9) -- (10.4, 2.1);
\draw (10.4, 2.1) -- (10.6, 1.65);
\draw (10.6, 1.65) -- (10.4, 1.2);
\draw (10.4, 1.2) -- (10.4, 1.4);
\draw (10.4, 1.4) -- (8.6, 1.4);
\draw (8.6, 1.4) -- (8.6, 1.9);

\node at (9.5, 1.63) {\scriptsize{\faq}};

\node at (3.5, 0.9) {\scriptsize{\textbf{$k$-means}}};
\node at (4.08, 0.51) {\scriptsize{\textbf{on individual}}};
\node at (4.03, 0.1) {\scriptsize{\textbf{DB relations}}};

\node at (9.5, 0.86) {\scriptsize{\textbf{weighted}}};
\node at (9.75, 0.51) {\scriptsize{\textbf{$k$-means on}}};
\node at (10.0, 0.1) {\scriptsize{\textbf{cross product}}};
\end{tikzpicture}
}
\label{fig:workflows}
\caption{Conventional $k$-means and Rk-means.}
\end{figure}

As an example, consider
a retailer database consisting of three tables: {\small \tt product},
which contains data about $p$ products, {\small \tt store},
which contains data about $s$ stores, and {\small \tt
transaction}, which contains the number of transactions for each
(product, store) combination on a given day.
The table {\small \tt product} contains
information about each of the $p$ products, {\small \tt stores} contains
information about each of the $s$ stores, and {\small \tt transactions} contains
the (nonzero) number of transactions for each (product, store) combination on a
given day. 
A practitioner may want to cluster each (product, store)
combination as part of an analysis to determine items with related
sales patterns across different stores for a given week.
To do this, she constructs a data
matrix containing all (product, store) combinations (including those with
zero sales) for a given week, and additional attributes for each product and
store. This can be achieved, for instance, by the following feature extraction
query, given in SQLite syntax:

\begin{center}
\begin{verbatim}
SELECT P.id AS i, S.id AS s, P.type AS t, P.price AS p,
           S.yelp_rating AS y, sum(ifnull(T.count, 0)) AS c
       FROM product P, store S LEFT JOIN transactions T
       ON T.product_id == P.id AND T.store_id == S.id
           AND T.date BETWEEN '2019-05-13' AND '2019-05-20'
       GROUP BY P.id, S.id;
\end{verbatim}
\end{center}

The result of this query is of size $\Theta(ps)$.  But the {\small \tt
  transaction} table can be significantly smaller than this, since many stores
may have zero sales of a particular product in a given week.  Thus, the size of
the data matrix can be asymptotically greater than the total input relations'
sizes.
Real-world FEQs possess a similar explosion in both space and time complexity,
only at a much larger scale, since they generally involve many more aggregations and
tables.  
In Section~\ref{sec:experiments}, we present a real dataset 
from a large US retailer. The database has 6 tables of total size 1.5GB. The
FEQ result, however, takes up 18GB, and constructing it
takes longer than running a learning algorithm on it. 

Stripping away the language of databases, a fundamental challenge is how to learn
about the joint distribution of a data population given only
marginal samples revealed by relational tables. This is possible when
the objective function of an underlying model admits some factorization
structure similar to conditional independence in graphical models~\cite{MR2778120}.
This insight was exploited recently by database theorists to 
devise algorithms evaluating a generic class of relational queries 
called {\it functional aggregate queries}, or \faqs~\cite{DBLP:conf/pods/KhamisNR16}.
The ability to answer $\faqs$ quickly is a building block for a new class
of efficient algorithms for training supervised learning models over relational data,
{\em without} having to materialize (i.e. compute) the entire data 
matrix~\cite{ANNOS:PODS:2018, ANNOS:DEEM:2018, Schleich:2016:LLR:2882903.2882939}.

\comment{
The database and data mining communities have pioneered the
``\emph{relational}'' approach to machine learning, where machine learning
computations are accelerated by operating directly on the database tables,
avoiding the full representation of the data matrix
\cite{ANNOS:PODS:2018}.  These accelerations hinge on the understanding of
feature extraction queries as {\it functional aggregate queries}, or \faqs.  A
full exposition of \faqs\ is beyond the scope of this paper; but, it suffices to
point out that the state-of-the-art \InsideOut\
algorithm~\cite{DBLP:conf/pods/KhamisNR16} can be used to solve \faqs\ with the
best known asymptotic efficiency.

It turns out that `{\it relational}' machine learning algorithms that exploit
\faqs~\cite{ANNOS:PODS:2018, ANNOS:DEEM:2018, Schleich:2016:LLR:2882903.2882939}
are able to obtain complete models in asymptotically less time than it takes
even to compute the result of the feature extraction query.  The intuition
behind this speedup is that the computations involved in creating the model can
be `pushed down' into the database computation, yielding orders-of-magnitude
practical speedup and significant asymptotic speedup.  (More details are
provided in the appendix.)
}

The goal of this paper is to devise a method for fast clustering of relational
data, without having to materialize the full data matrix. The challenge
of unsupervised learning tasks in general and the $k$-means algorithm in particular
is that the learning objective is not decomposable across marginal samples
in relational tables. To enable fast relational computation, we utilize the
idea of constructing a {\it grid coreset}---a small set of points that provide a
good summarization of the original (and unmaterialized) data tuples,
based on which a provably good clustering can be obtained.

\comment{
We are thus attracted to the idea of using \faq\ accelerations of this type to
efficiently perform $k$-means in the database.
However, it turns out that the
$k$-means objective does not lend itself to easy speedup with \faqs.  Therefore,
we instead exploit relational computation to compute a {\it coreset}---a small
set of points that provide a {\it summarization} of the points in the original
dataset, and can be used to construct a provably good clustering.  While many
coreset construction algorithms have been developed, none have been developed
for the relational setting (and would be challenging to adapt, as we will see),
and also each would require the feature extraction query to be computed---making
those approaches asymptotically slower than a relational approach.
Overcoming these challenges, we present {\bf Rk-means}, a fast relational
$k$-means clustering algorithm that computes coresets efficiently inside the
database.  Rk-means provides several benefits over existing
 algorithms:
}




The resulting algorithm, which we call Rk-means, has several remarkable
properties. First, Rk-means has a provable constant approximation guarantee
relative to the $k$-means objective, despite the fact that the algorithm
does not require access to the full data matrix. Our approximation analysis
is established via a connection of Rk-means to the theory of optimal transport~\cite{MR2459454}.
Second, Rk-means is enhanced by leveraging structures prevalent in relational data: 
categorical variables, functional dependencies, and the topology of feature extraction
queries. These structures lead to exponential reduction in
coreset size without incurring loss in the coreset's approximation error.
We show that Rk-means is provably more efficient both in time and space
complexity when comparing against the application of the vanilla $k$-means to the
full data matrix. 
Finally, experimental results show significant speedups with little loss in the
$k$-means objective.  We observe orders-of-magnitude improvement in the
running time compared to traditional methods. Rk-means is able to operate
when other approaches would run out of memory, enabling clustering on
truly massive datasets.

\comment{
\begin{itemize}
  \item Rk-means has a provable constant approximation guarantee for $k$-means
clustering, shown via a connection to the theory of optimal
transport~\cite{MR2459454}.  This is perhaps surprising, since we are able to
achieve a provably good clustering {\bf without} materializing the entire data
matrix.

  \item Rk-means is simple and ideal for use in practice.  The core idea is to
cluster each dimension (or set of dimensions) individually to produce $\kappa
\le k$ points, and then assemble a weighted grid of size $\kappa^d$ (it turns
out that most points in the grid will have weight zero).  This provides the
coreset for clustering.
  \item Rk-means exploits the useful structures often provided by
databases, such as categorical data and so-called functional dependencies (to be
defined in the sequel).
It avoids one-hot encoding 
and greatly reduces the size of the coreset
in the presence of functional dependencies from $O(k^d)$ to $O(d'k^{d - d'})$,
when $d'$ features are functionally dependent.

  \item Experimental results show significant speedups with little loss in the
$k$-means objective.  We observe several orders of magnitude improvement in the
running time compared to traditional methods, and Rk-means is able to operate
when other approaches would run out of memory, enabling clustering on
truly massive datasets.
\end{itemize}
\vspace*{-0.5em}
}

\comment{
The paper proceeds as follows.  We discuss connection to related work in
Section~\ref{sec:related}.  Section~\ref{sec:algorithm} is a database-free
exposition of Rk-means algorithm, including the connection of Rk-means to
optimal transport and appproximation bound analysis.
Section~\ref{sec:structures} discusses computational gains by leveraging
structures common in relational data. Section~\ref{sec:rdb} describes Rk-means
in a standard relational database setting, briefly introducing the role of
\faqs\ in the algorithm's implementation and setting the
stage for a formal runtime complexity result.  Section~\ref{sec:experiments}
presents experimental results for Rk-means.
}

\section{Background and related work}
\label{sec:related}

\subsection{Background on Database Queries and FAQs}
\label{sec:database_queries}

Recent advancements in the database community have produced new classes of
query plans and join algorithms~\cite{DBLP:journals/sigmod/Khamis0R17,
DBLP:conf/pods/KhamisNR16,DBLP:conf/pods/000118,DBLP:journals/sigmod/OlteanuS16}
for the
efficient evaluation of general database queries. These general algorithms hinge on the
expression of a database query
as a {\em functional aggregate query}, or $\faq$~\cite{DBLP:conf/pods/KhamisNR16}.

Loosely speaking, an $\faq$ is a collection of {\it aggregations} (be they sum,
max, min, etc.) over a number of functions known as {\it factors}\footnote{A
full formal definition of {\faq}s can be found in ~\cite{DBLP:conf/pods/KhamisNR16}, but
is not required for our work here so we omit it.}, in the same sense as that used in
graphical models. In particular, if there was only one aggregation 
(such as {\sf sum}), then an $\faq$ is just a sum-product form typically used to compute 
the partition function. An $\faq$ is more general as it can involve many marginalization
operators, one for each variable, and they can interleave in arbitrary way.
Every relational database query can be
expressed in this way. Consider the example query of Section~\ref{sec:intro}:
for this, the task of the database query evaluator is to compute {\small \tt
max(transactions.count)} for every tuple $(i, s, t, p, y)$ that exists in the
output.  We can express this as a function:
\begin{align}
\phi(i, s, t, p, y) = \max_c \max_i \max_s \psi_P(i, t, p) \psi_T(i, s, c) \psi_S(s, y).
\label{eqn:example_faq}
\end{align}
In this we have three {\it factors} $\psi_P(\cdot)$, $\psi_T(\cdot)$, and
$\psi_S(\cdot)$, which correspond to the {\small \tt product}, {\small \tt
transactions}, and {\small \tt store} tables, respectively.  We define
$\psi_P(i, t, p) = 1$ if the tuple $(i, t, p)$ exists in the {\small \tt
product} table and $0$ otherwise; we define $\psi_S(s, y)$ similarly.  We define
$\psi_T(i, s, c) = c$ if the tuple $(i, s, c)$ exists in the {\small \tt
transactions} table and $0$ otherwise.  Thus, given any tuple $(i, s, t, p, y)$,
we can compute {\small \tt max(transactions.count)}$ = \phi(i, s, t, p, y)$.

In order to efficiently solve an $\faq$\ (of which Equation~\eqref{eqn:example_faq}
is but one example), the $\InsideOut$\ algorithm of~\cite{DBLP:conf/pods/KhamisNR16} may be 
used; $\InsideOut$ is a variable elimination algorithm, inspired from variable elimination
in graphical model, with several new twists. One twist is to adapt 
worst-case optimal join algorithms~\cite{} to speed up computations by exploiting sparsity
in the data. Another twist is that the algorithm has to carefully pick a variable order
to minimize the runtime, while at the same time respect the correctness and semantic of the
query. Unlike in the case of computing a sum-product where the summation opeartors are
commutative, in a $\faq$ the operators may not be commutative.

To characterize the runtime of this algorithm, we must first observe that each
database query and thus \faq\ corresponds to a hypergraph $\mathcal{H} = \{
\mathcal{V}, \mathcal{E} \}$.  The vertices $\mathcal{V}$ of this hypergraph
correspond to the variables of the \faq\ expression; in our example, we have
$\mathcal{V} = \{ i, s, t, p, y, c \}$.  The hyperedges $\mathcal{E}$, then,
correspond to each factor $\psi_P(\cdot)$, $\psi_T(\cdot)$, and
$\psi_S(\cdot)$---which in turn correspond to the tables in the database.  This
hypergraph $\mathcal{H}$ is shown in Figure~\ref{fig:example_hypergraph}.

\begin{figure}[h!]
\begin{center}
\begin{tikzpicture}
  \node (t) at (-2.5, 0.75) { };
  \node (p) at (0.5, 0.75) { };
  \node (i) at (-1.0, 0.0) { };
  \node (c) at (1.0, -0.4) { };
  \node (s) at (3.5, 0.0) { };
  \node (y) at (3.5, 0.75) { };

  \begin{scope}[fill opacity=0.8]
    \filldraw[fill=yellow!70] ($(t) + (0.0, 0.3)$)
        to [out=0,in=180] ($(p) + (0.0, 0.3)$)
        to [out=0,in=0] ($(i) + (0.0, -0.5)$)
        to [out=180,in=180] ($(t) + (0.0, 0.3)$);

    \filldraw[fill=red!70] ($(s) + (0.0, -0.35)$)
        to [out=180,in=180] ($(y) + (0.0, 0.35)$)
        to [out=0,in=0] ($(s) + (0.0, -0.35)$);

    \filldraw[fill=green!70] ($(c) + (0.0, -0.2)$)
        to [out=180,in=190] ($(i) + (-0.2, 0.15)$)
        to [out=10,in=170] ($(s) + (0.2, 0.15)$)
        to [out=350,in=0] ($(c) + (0.0, -0.2)$);
  \end{scope}

  \node at (-1.0, 0.6) { $\psi_P$ };
  \node at (1.0, 0.0) { $\psi_T$ };
  \node at (3.5, 0.4) { $\psi_S$ };

  \fill (t) circle (0.1) node [right] {$t$};
  \fill (p) circle (0.1) node [left] {$p$};
  \fill (i) circle (0.1) node [right] {$i$};
  \fill (c) circle (0.1) node [right] {$c$};
  \fill (s) circle (0.1) node [right] {$s$};
  \fill (y) circle (0.1) node [right] {$y$};
\end{tikzpicture}
\end{center}
\label{fig:example_hypergraph}
\caption{Example hypergraph $\mathcal{H}$ for the example query and \faq\ in
Equation~\ref{eqn:example_faq}.}
\end{figure}
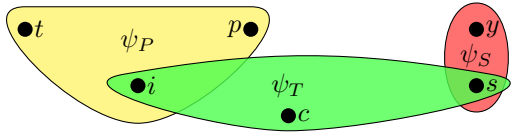

Roughly, \InsideOut\ proceeds by first selecting a variable ordering
$\sigma$, reordering the \faq\ accordingly, and then solving the inner
subproblems repeatedly, in much the same way variable elimination works for
inference in graphcal models~\cite{MR2778120}.
The runtime of \InsideOut\ is dependent on a notion of width
of $\mathcal{H}$ called \faq-width, or $\operatorname{faqw}(\cdot)$.  Fully
describing this width is beyond the scope of this paper and we encourage readers
to refer to~\cite{DBLP:conf/pods/KhamisNR16} for full details.
The \faq-width is a generalized version of {\em fractional hypertree width}
of~\cite{DBLP:journals/talg/GroheM14} (denoted by $\fhtw$).
When the $\faq$ query does not have free variables, $\faqw = \fhtw$.
Given some $\faq$ with hypergraph $\mathcal{H}$, via Section
4.3.4 of~\cite{DBLP:journals/sigmod/Khamis0R17}, $\InsideOut$ runs in time
$\tilde{O}(N^{\operatorname{faqw}_{\mathcal{H}}(\sigma)} + Z)$,
where we assume that the support of each factor\footnote{Or in our
case, the number of tuples in the table corresponding to that factor.} is no
more than $O(N)$, and $Z$ is the number of tuples in the output.  As an example,
the hypergraph of Figure~\ref{fig:example_hypergraph} has
$\operatorname{faqw}_{\mathcal{H}}(\sigma) = 1$.  Overall, \InsideOut\ gives us
the most efficient known way to evaluate problems that can be formulated as
\faqs.

\subsection{Coresets for clustering} 

From early work on $k$-means algorithm~\cite{Lloyd-82}, ideas emerged for acceleration via
coresets~\cite{Har-Peled-2004,Bachem-2017}.  Coresets have become the cornerstone of
modern streaming algorithms \cite{GuhaMMMO03,BravermanFLSY17}, massively parallel
(MPC) algorithms \cite{EneIM11,BahmaniMVKV12}, and are used to speed up
sequential methods \cite{MeyersonOP04,SohlerW18}.

Unfortunately, existing algorithms for coreset construction do not readily lend
themselves to the relational setting; there are several hurdles.  First of all,
coresets are formed by constructing a set $S$ of data points (tuples) that represent the
entire data set $X$ well. Typically, $S$ is a \emph{weighted} representation of
the data, where each point in the universe contributes one unit of weight to its
closest point in $X$ \cite{Peled,EneIM11,BalcanEL13}.  In our relational setting, 
$X$ can only be formed by computing the FEQ, but our goal is to avoid materializing $X$.

A common challenge for adapting existing coreset constructions given our
goal is that most methods construct $S$ in phases by determining the farthest 
points from $S$ \cite{Thorup01,Arthur-2007,Peled}.  This is difficult without
$X$ fully materialized.
%
Another difficulty is that, even if the points in $S$ are given, weighting the points in $S$ 
is an open problem for relational algorithms \cite{PODSInequalities}. Without $X$
materialized, again, the points and their attributes are stored across several tables.
Fixing a point $x \in S$ and finding the number of closest points in 
(unmaterialized) $X$ is non-trivial.  No method, either deterministic or
stochastic (e.g., sampling),
is known that runs in time asymptotically faster than computing/materializing $X$.  Our method avoids this 
by constructing a \emph{grid coreset} $S$ which can be
decomposed over the tables in such a way that computing the weights of the
points is a straightforward task.


\subsection{Other Related Work}

Our work draws inspiration from three lines existing work and ideas: coresets for
clustering (discussed above), relational algorithms, and optimal transport.  
Some previous work has focused on the connection to databases; database and disk
hardware optimizations have been considered to improve clustering relational
data~\cite{Ordonez06, OrdonezO04}.  Recent advances include
the work of~\cite{ANNOS:PODS:2018,Schleich:2016:LLR:2882903.2882939}.
$k$-means has also been connected to optimal transport, which goes back to at
least~\cite{Pollard-82} (see also~\cite{Graf-Luschgy-00}). Recently this
connection has received increased interest in the statistics and machine
learning communities, resulting in fresh new clustering
techniques~\cite{delBarrio-2017,Ho-2017,Ye-2017}.
To our knowledge, these related lines of work have not been explored
together. Motivated by clustering relational data, our attempt at solving a 
clustering problem formulated as optimal transport in the marginal 
(projected) spaces to scalably perform $k$-means clustering
appears to be the first in the literature.

Finally, it is worth noting that despite its popularity, the basic $k$-means technique is not always 
a preferred choice in clustering categorical or high-dimensional data. One may
either adopt other clustering techniques~\cite{Hartigan-75,Kaufman-Roussew-90,Everitt-etal-11}, 
or modify the basic $k$-means method,
e.g., by suitably placing weights on different features of mixed data types
and replacing metric $\ell_2$ by $\ell_p$
~\cite{Huang-1998}, or incorporating a regularizer to combat high
dimensionality~\cite{Witten-Tibshirani-10,Sun-Wang-2012}. As we shall see, the relational techniques and associated
theory that we introduce for the basic $k$-means extend easily to such improvements.




\section{Rk-means, coresets and optimal transport}
\label{sec:algorithm}

Although the Rk-means algorithm is motivated by application to
relational databases, its basic idea is also of independent interest
and can be easily described without the database language.


First we define the \emph{weighted} $k$-means problem, which Rk-means solves 
(weights are also handy in combining mixed data types~\cite{Huang-1998}). Let
$\Xset$ be a set of points in $\mathbb{R}^d$, and $\bm Y$ be a non-empty set of
points in the same space.  Let $d(\bm x,\bm Y) := \min_{\bm y \in \bm Y}
\norm{\bm x-\bm y}$ denote the minimum distance from $\bm x$ to an
element in $\bm Y$. In some cases, the $\ell_2$ norm $\|\cdot\|$ may be replaced 
by the $\ell_p$ norm $\|\cdot\|_p$ for some $p\geq 1$.
A {\em weighted $k$-means instance} is a pair $(\bm X, w)$, where
$\bm X$ is a set of points in $\R^d$ and $w : \bm X \to \R^+$ is a weight
function.
Without loss of generality, assume $\sum_{\bm x\in \Xset} w(\bm x) = 1$.
The task is to find a set
$\bm C=\{\bm\mu_1,\cdots,\bm\mu_k\}$ of $k$ centroids to minimize the objective
$L(\Xset,\bm C, w) = \sum_{x\in \Xset} w(\bm x) d(\bm x,\bm C)^2.$

\setlength{\textfloatsep}{7pt}
\begin{figure}
\begin{algorithm}[H]
\begin{algorithmic}[1]
   \STATE {\bf Input:} query $Q$, number of clusters $k$
   \STATE {\bf Input:} $[d] = S_1 \cup \cdots \cup S_m$, $\kappa \geq 2$
   \STATE {\bf Output:} centroids $\bm C \in \mathcal{R}^{k \times d}$

   \medskip
   \FOR {$j = 1$ to $m$}
     \STATE $\bm X_j \gets \{ \bm x_{S_j} \suchthat \bm x \in \bm X\}$
     \STATE $w_j \gets $ weight function defined in~\eqref{eqn:wj}
      \STATE $\bm C_j \gets \wkmeans_1(\bm X_j, w_j, \kappa)$ \COMMENT{approx. ratio $\alpha$} \label{line:alpha}
   \ENDFOR

   \medskip

   \STATE $\bm G \gets \bm C_1 \times \ldots \times \bm C_m$ \COMMENT{the grid coreset}
   \STATE $\wgrid \gets $ weight function defined in~\eqref{eqn:wg}
   \STATE $\bm C \gets \wkmeans_2(\bm G, \wgrid, k)$ \COMMENT{approx. ratio $\gamma$} \label{line:gamma}
\end{algorithmic}
\caption{{\bf Rk-means}: $k$-means via grid-coreset}
\label{alg:kmeans}
\end{algorithm}
\end{figure}

That is, we want to solve the problem $\opt(\bm X, w) :=
\min_{\bm C} L(\Xset, \bm C, w)$.  With Rk-means, we will do this by projecting
$\Xset$ onto different sets of coordinates, and clustering each projection
individually.  To this end, let $[d] = S_1 \cup \cdots \cup S_m$ denote an
arbitrary {\em partition} of the dimensions $[d]$ into non-empty subsets.
For every $\bm x \in \R^d$, and $j \in [m]$, let $\bm x_{S_j}$ denote the
projection of $\bm x$ onto the coordinates in $S_j$. Define the projection set
$\bm X_j$ and corresponding weight function $w_j : \R^{S_j} \to \R$  by

\begin{align}
    \bm X_j &:= \left\{ \bm x_{S_j} \suchthat \bm x \in \bm X \right\}, \\
    w_j(\bm z) & := \!\!\!\!\sum_{ \bm x \in \bm X \ : \ \bm x_{S_j} = \bm z}
\!\!w(\bm x).
\label{eqn:wj}
\end{align}

In words, the $w_j$ are the {\em marginal measures} of $w$ on the subspace of
coordinates $S_j$.  With these notations established, Algorithm \ref{alg:kmeans}
presents the high-level description of our algorithm, Rk-means.

For each $j \in [m]$, in line 7 we perform $k$-means to obtain $\kappa$ individual
clusters on each subspace $S_j$ for some $\kappa \geq 2$.
These are solved using some weighted $k$-means algorithm denoted
by $\wkmeans_1$ with approximation ratio $\alpha$.
Then, using the results of these clusterings, we assemble a cross-product weighted grid
$\bm G$ of centroids, and then perform
$k$-means clustering on these using the algorithm denoted $\wkmeans_2$ to reduce
down to the desired result of $k$ centroids.
Typically, take $\kappa = O(k)$.

Let $\bm X := \biguplus_{\bm g \in \bm G} \bm X_{\bm g}$
denote a partition of $\bm X$ into $|\bm G|$ parts, where $\bm X_{\bm g}$ denote the set
of points in $\bm X$ closer to $\bm g$ than other grid points in $\bm G$ (breaking ties
arbitrarily). Then, the weight function for line 11 is $\wgrid : \bm G \to \R^+$
is defined as

\begin{align}
    \wgrid(\bm g) := \sum_{\bm x \in \bm X_{\bm g}} w(\bm x).
    \label{eqn:wg}
\end{align}


\paragraph*{Weighted $\kmeans$ and optimal transport}
We will analyze the Rk-means algorithm in the language of optimal transport. The
connection of $k$-means in general, and of our algorithm to optimal transport in
particular, provides another interesting insight into our algorithm.

The {\em optimal transport distance} characterizes the
distance between two probability measures, by measuring the optimal cost of transporting
mass from one to another~\cite{MR2459454}. Although this is defined more generally for any
two probability measures in abstract spaces, for our purpose it is convenient to consider
two discrete probability measures $P$ and $P'$ on $\mathbb{R}^d$.

Let $\bm Z$ and $\bm Z'$ be two finite point sets in $\R^d$.
Let $\delta$ denote the Dirac measure.
Let $P := \sum_{\bm z \in \bm Z} p(\bm z) \delta_{\bm z}$ and
$P' := \sum_{\bm z' \in \bm Z'} p'(\bm z') \delta_{\bm z'}$
be two measures with supports $\bm Z$ and $\bm Z'$, respectively.
The mass transportation plan can be formalized by a {\em coupling}: a joint distribution
$\bm Q = (q({\bm z,\bm z'}))_{(\bm z,\bm z') \in \bm Z\times \bm Z
}$, where the marginal constraints
$\sum_{\bm z\in \bm Z} q_{\bm z,\bm z'} = p'(\bm z')$ and
$\sum_{\bm z'\in \bm Z'} q_{\bm z,\bm z'} = p(\bm z)$ hold.

\bdefn
For any $p\geq 1$, the {\em Wasserstein distance} of order $p$ is defined by the 
minimization of $\bm Q$ over
all possible couplings:
    $\Wassersteinp(P,P') = \min_{\bm Q} ( \{\sum_{\bm z, \bm z'} q(\bm z,\bm z')
    \norm{\bm z - \bm z'}_p^p ) \}^{1/p}.$
\edefn

Let $\Pn = \sum_{\bm x\in \Xset}\! w(\bm x)\delta_{\bm x}$ be the discrete measure associated with
the input instance of our weighted $k$-means problem; then,
this can be expressed precisely as an optimal transport problem:
 $M^* = \argmin \; \Wasserstein^2(M, \Pn),$
where the optimization is over the space of discrete measures $M$ that have $k$
support points (the set $\bm C$ of $k$ centroids).
Note that $\opt(\bm X, w) = \Wasserstein^2(M^*, \Pn)$. Replacing $\ell_2$ by say $\ell_1$, 
we obtain the $k$-median problem, for which the objective becomes $W_1(M^*,\Pn)$.

\paragraph*{Approximation Analysis.}
We next analyze the approximation ratio of our Rk-means algorithm working with
the $\Wasserstein^2$ 
objective, provided that $\wkmeans_1$ has approximation ratio $\alpha$ and
$\wkmeans_2$ has approximation ratio $\gamma$.\footnote{The best known
approximation ratio is $6.357$ for data in Euclidean space
\cite{AhmadianNSW17}.} 
The reason we might want to invoke different algorithms to solve these sub-problems is
because, as we shall show in the next section, we may want to exploit the (relational)
structures of the FEQ to construct a ``nice'' partition $S_1 \cup \cdots \cup S_m$.
We show that the overall approximation ratio of Rk-means is
$(\sqrt \alpha+\sqrt \gamma+\sqrt{\alpha\gamma})^2$.  In many common cases, the
relational database has structure that allows $\alpha = 1$, yielding an overall
approximation ratio of $(1 + 2\sqrt\gamma)^2$.

For our analysis it is useful to understand Algorithm~\ref{alg:kmeans} in the
language of optimal transport.  For any finite point set $\bm Y \subset \R^d$
and a measure $M = \sum_{\bm y \in \bm Y} p(\bm y) \delta_{\bm y}$ with support
$\bm Y$, define the marginal measures $M_j$ on coordinates $S_j$ {\em induced}
by $M$ in the natural way, i.e.  $M_j := \sum_{\bm z \in \bm Z} p_j(\bm z)
\delta_{\bm z}$ where $p_j$ is defined analogous to $w_j$ in~\eqref{eqn:wj}.
Under this notation, $\Pn$ induces the marginal measures $\Pn_j:=\sum_{\bm z\in
\Xset_j} w_j(\bm z) \delta_{\bm z}$.  Then, Algorithm~\ref{alg:kmeans} can be
described by the following steps:

\begin{itemize}
    \item [(1)] For each $j \in [m]$, pick $M_j$ to be the ($\alpha$-approximate) minimizer of
        $\Wasserstein^2(M_j, \Pn_j)$, where $\bm C_j = \textrm{supp}(M_j)$ is the support of $M_j$ 
and $|C_j| = \kappa$ (line 7).

\item [(2)] Collect the $\kappa^d$ grid points $\bm G$
and let probability measure $Q$ be the one with support in $\bm G$ such that $Q$
minimizes $\Wasserstein^2(Q, \Pn)$. (We solve this problem exactly!)

\item [(3)] Finally, return $P$ which is the measure with exactly $k$ support points
    in $\mathbb{R}^d$ that ($\gamma$-approximately)
        minimizes $\Wasserstein^2(P,Q)$ (line 11). 
\end{itemize}

This is precisely the solution
obtained by Algorithm~\ref{alg:kmeans}. We present next some useful facts.

\blmm
For any discrete measure $M$ on $\R^d$,
$\Wasserstein^2(M,\Pn) \geq \sum_{j=1}^{m} \Wasserstein^2(M_j, \Pn_j).$
\label{prop:relateOpt}
\elmm
\begin{proof}
A valid coupling of two measures induces valid marginal couplings of marginal measures.
\end{proof}

\bprop
\label{W-bounds}
The following hold:
\begin{itemize}
    \item[(a)] If $\kappa \geq |\textrm{supp}(M_j^*)|\;\forall\;j \in [m]$, then $\Wasserstein(\Pn,P) \leq (\sqrt \gamma+ \sqrt \alpha + \sqrt{\gamma \alpha}) \Wasserstein(\Pn, M^*)$. 
    \item[(b)] For any $\kappa \geq 1$, there exists a distribution $\Pn$ such that 
\begin{align}
\frac{\Wasserstein(\Pn,P)}{\Wasserstein(\Pn,M^*)}
&\geq \sqrt{1- e^{-m/(2\kappa)}} \frac{\sqrt{3}k^{3/(2m)}}{2\kappa m^{1/2}}.
\end{align}
\end{itemize}
\eprop
\begin{proof}
(a) By the definition of $Q$, the optimal transport plan from $\Pn$ to $Q$ is
such that each support point $s\in S$ is received by all $x \in \Xset$
nearest to $s$ compared to other points in $S$. So,

\begin{align}
\Wasserstein^2(\Pn,Q) 
&= \sum_{\bm x\in \Xset} w(\bm x) d(\bm x, \bm G)^2  \\
&= \sum_{\bm x \in \Xset} w(\bm x) \sum_{j=1}^{m} d(\pi_{S_j}\bm x, \bm C_j)^2 \\ 
&= \sum_{j=1}^{m} \sum_{\bm z\in \Xset_j} w_j(\bm z) d(\bm z,\bm C_j)^2 \\
& = \sum_{j=1}^{m} \Wasserstein^2(M_j,\Pn_j) \\
& \leq \alpha \sum_{j=1}^{m} \Wasserstein^2(M_j^*,\Pn_j)  \\
& \leq \alpha \cdot \Wasserstein^2(M^*,\Pn).
\end{align}

The second to last inequality is due to the $\alpha$-approximation of $\wkmeans_1$,
and condition that $|\textrm{supp}(M_j)| \geq |\textrm{supp}(M_j^*)|$.
The last inequality follows from Proposition \ref{prop:relateOpt}.
By the triangle inequality of $\Wasserstein$, 

\begin{align}
    \Wasserstein(\Pn,P) 
    & \leq \Wasserstein(\Pn,Q) + \Wasserstein(Q, P) \\
    & \leq \Wasserstein(\Pn,Q) + \sqrt\gamma \cdot \Wasserstein(Q,M^*) \\
    & \leq \Wasserstein(\Pn,Q) + \sqrt \gamma (\Wasserstein(Q,\Pn) +
    \Wasserstein(\Pn,M^*)) \\
    & = (1+\sqrt\gamma) \Wasserstein(\Pn,Q) + \sqrt \gamma \cdot \Wasserstein(\Pn,M^*) \\
    & \leq (1+\sqrt\gamma) \sqrt \alpha \Wasserstein(\Pn,M^*) + \sqrt \gamma \cdot
    \Wasserstein(\Pn,M^*) \\
    & = \bigl( \sqrt\alpha+\sqrt\gamma+\sqrt{\alpha\gamma} \bigr) \cdot \Wasserstein(M^*,\Pn). 
\end{align}

The second inequality is due to the fact that $\wkmeans_2$ has approximation
ratio $\gamma$; the first and third  are the triangle
inequality. We conclude the proof.

    (b) We only need to construct an example of $\Pn$ for the case $d = m$.
Although $\Pn$ as an input to the algorithm is a discrete measure, for the
purpose of this proof it suffices to take $\Pn$ to be the uniform distribution
on $[0,1]^m$ (which can be approximated arbitrarily well by a discrete measure).
It is simple to verify that if $k_0=k^{1/m}$ is a natural number, then $M^*$ is a
uniform distribution on the regular grid of size $k_0$ in each dimension. It
follows that $\Wasserstein^2(\Pn,M^*) \leq \frac{m}{12k_0^3} =
\frac{m}{12k^{3/m}}$.  The grid points $\bm G$ range over the set
$S:=[1/(2\kappa),1-1/(2\kappa)]^m$. Moreover, $Q$ is a uniform distribution on
$\bm G$. Now $P$ is the outcome of line (11) so the support of
$P$ must lie in the convex hull $S$ of $\bm G$. The cost of each unit
mass transfer from an atom in the complement of set $[1/(4\kappa),
1-1/(4\kappa)]^m$ to one in $S$ is at least $(1/4\kappa)^2$, so 
$\Wasserstein^2(\Pn,P) \geq (1/4\kappa)^2\cdot[1- (1-1/(2\kappa))^m]$. 
We note $(1-1/(2\kappa))^m < e^{-m/2\kappa}$ to conclude the proof.
\end{proof}

The condition of part (a) is satisfied, for instance, by setting $\kappa = k$. In practice,
$\kappa < k$ may suffice. Moreover, part (b) dictates that $\kappa$ must grow with $k$
appropriately for our algorithm to maintain a constant approximation guarantee.
Since solution $\bm C$ has cost $L(\bm X, \bm C, w) = \Wasserstein^2(P, \Pn)$,
and $\opt(\bm X, w) = \Wasserstein^2(M^*, \Pn)$, the following theorem is immediate from
Prop.~\ref{W-bounds}(a).
\begin{thm}\label{thm:approx:bound}
Suppose $\wkmeans_1$ and $\wkmeans_2$ have approximation ratios $\alpha$
and $\gamma$. Then by choosing $\kappa = k$, the solution $\bm C$ 
given by Rk-means has the following guarantee:
$L(\bm X,\bm C,w) \leq
    (\sqrt \gamma+ \sqrt \alpha + \sqrt{\gamma \alpha})^2 \opt(\bm X,w).$

Specifically, if both sub-problems are solved optimally ($\alpha=\gamma=1$),
Rk-means is a $9$-approximation.
\end{thm}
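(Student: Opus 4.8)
The plan is to derive Theorem~\ref{thm:approx:bound} as a direct corollary of Proposition~\ref{W-bounds}(a), so the ``proof'' is essentially a short bookkeeping argument translating the Wasserstein statement into the $k$-means cost statement. First I would verify that the hypothesis of part (a) is met: choosing $\kappa = k$ guarantees $\kappa \geq |\textrm{supp}(M_j^*)|$ for every $j \in [m]$, since each marginal optimal measure $M_j^*$ has at most $k$ support points (it is the marginal of the $k$-point measure $M^*$, hence supported on at most $k$ points). This is the only place the choice $\kappa = k$ is used.

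Next I would recall the two identities already established in the excerpt: $\opt(\bm X, w) = \Wasserstein^2(M^*, \Pn)$ by the optimal-transport reformulation of weighted $k$-means, and $L(\bm X, \bm C, w) = \Wasserstein^2(P, \Pn)$, where $P$ is the $k$-support measure returned on line 11 (its support being exactly the returned centroid set $\bm C$). Given these, Proposition~\ref{W-bounds}(a) says $\Wasserstein(\Pn, P) \leq (\sqrt\gamma + \sqrt\alpha + \sqrt{\gamma\alpha})\,\Wasserstein(\Pn, M^*)$; squaring both sides (legitimate since both sides are nonnegative) yields
\[
L(\bm X, \bm C, w) = \Wasserstein^2(\Pn, P) \leq (\sqrt\gamma + \sqrt\alpha + \sqrt{\gamma\alpha})^2 \, \Wasserstein^2(\Pn, M^*) = (\sqrt\gamma + \sqrt\alpha + \sqrt{\gamma\alpha})^2 \, \opt(\bm X, w),
\]
which is exactly the claimed bound. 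The final sentence, that $\alpha = \gamma = 1$ gives a $9$-approximation, follows by plugging in: $(\sqrt 1 + \sqrt 1 + \sqrt 1)^2 = 3^2 = 9$.

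There is essentially no obstacle here — the substantive work was already done in Proposition~\ref{W-bounds}(a) (the chain of inequalities using the $\alpha$-approximation of $\wkmeans_1$, the exact solve for $Q$, the $\gamma$-approximation of $\wkmeans_2$, and two applications of the triangle inequality for $\Wasserstein$). If I wanted to be careful about one point, it would be confirming that $P$ as defined in step (3) of the optical-transport description of the algorithm coincides with the measure whose cost is $L(\bm X, \bm C, w)$ — i.e.\ that optimally transporting $\Pn$ onto a fixed $k$-point support set reproduces the $k$-means cost with those points as centroids — but this is the same observation already used to justify $\opt(\bm X, w) = \Wasserstein^2(M^*, \Pn)$ and $\Wasserstein^2(\Pn, Q) = \sum_{\bm x} w(\bm x) d(\bm x, \bm G)^2$ in the proof of Proposition~\ref{W-bounds}(a), so it requires no new argument.
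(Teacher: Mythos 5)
Your proposal is correct and matches the paper's own argument: the paper likewise notes that $\kappa = k$ satisfies the support condition of Proposition~\ref{W-bounds}(a), and then declares the theorem immediate from the identities $L(\bm X,\bm C,w)=\Wasserstein^2(P,\Pn)$ and $\opt(\bm X,w)=\Wasserstein^2(M^*,\Pn)$, exactly as you do by squaring the Wasserstein bound and substituting $\alpha=\gamma=1$ for the $9$-approximation.
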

\paragraph{Regularized Rk-means}
It is possible to extend our approach to accommodate regularization techniques. 
This can be useful when the data are very high dimensional~\cite{Sun-Wang-2012,Witten-Tibshirani-10}.
Thus, the clustering formulation can be expressed as a regularized optimal transport problem:
 $M^* = \argmin \; \Wasserstein^2(M, \Pn) + \Omega(M)$
where the optimization is over the space of discrete measures $M$ that have $k$
support points (the set $\bm C$ of $k$ centroids), and the regularizer $\Omega(M)\geq 0$
typically decomposes over the $m$-partition of variables:
$\Omega(M) = \sum_{j=1}^{m} \Omega_j(M_j)$. For instance, $\Omega_j(M_j)$ may be taken
to be a multiple of the $\ell_1$ norm of $M_j$'s supporting atoms (e.g., group lasso penalty).
The algorithm has the same three steps as before, with some modification in (1') and (3'):
\begin{itemize}
    \item [(1')] For each $j \in [m]$, pick $M_j$ to be the ($\alpha$-approximate) minimizer of
        $\Wasserstein^2(M_j, \Pn_j) + \Omega_j(M_j)$,
        where $\bm C_j = \textrm{supp}(M_j)$ is the support of $M_j$ 
and $|C_j| = \kappa$ (line 7). 
    \item [(3')] Finally, return $P$ which is the measure with exactly $k$ support points
    in $\mathbb{R}^d$ that ($\gamma$-approximately)
        minimizes $\Wasserstein^2(P,Q) + \Omega(P)$ (line 11). 
\end{itemize}

\bprop
\label{R-bounds}
If $\kappa \geq |\textrm{supp}(M_j^*)|$ for all $j \in [m]$, then
\begin{align}
\frac{\Wasserstein^2(\Pn,P)+\Omega(P)}{\Wasserstein^2(\Pn,M^*)+\Omega(M^*)}
&\leq 2\alpha + 4\gamma + 4\alpha\gamma.
\end{align}
\eprop
\begin{proof}
As before the optimal transport plan from $\Pn$ to $Q$ is
such that each support point $s\in S$ is received by all $x \in \Xset$
nearest to $s$ compared to other points in $S$. So,

\begin{align}
\Wasserstein^2(\Pn,Q) + \Omega(M)
& = \sum_{j=1}^{m} \Wasserstein^2(M_j,\Pn_j) + \Omega(M) \\
& \leq \alpha \sum_{j=1}^{m} (\Wasserstein^2(M_j^*,\Pn_j) + \Omega_j(M_j^*)) \\
& \leq \alpha (\Wasserstein^2(M^*,\Pn) + \Omega(M^*)).
\label{imb}
\end{align}

The second to last inequality is due to the $\alpha$-approximation of (regularized) $\wkmeans_1$,
and condition that $|\textrm{supp}(M_j)| \geq |\textrm{supp}(M_j^*)|$.
The last inequality follows from Proposition \ref{prop:relateOpt} and the definition of
$\Omega$.
By the triangle inequality of $\Wasserstein$, as before

\begin{align}
  \Wasserstein(\Pn,P)
    & \leq \Wasserstein(\Pn,Q) + \Wasserstein(Q, P) \\
    & \leq \Wasserstein(\Pn,Q) +
     \sqrt{\gamma \Wasserstein^2(Q,M^*) + \gamma \Omega(M^*) - \Omega(P)} \\
    & \leq \Wasserstein(\Pn,Q) \\
    &+      \sqrt{2\gamma \Wasserstein^2(\Pn,Q) + 2\gamma \Wasserstein^2(\Pn,M^*) + \gamma
     \Omega(M^*) - \Omega(P)}.
\end{align}

Hence, by Cauchy-Schwarz and combining with~\eqref{imb} we obtain
\begin{align}
\Wasserstein^2(\Pn,P) 
& \leq 2 \biggr \{(1+2\gamma)\Wasserstein^2(\Pn,Q)
+ 2\gamma\Wasserstein^2(\Pn,M^*)+ \gamma\Omega(M^*)-\Omega(P)\biggr \} \\
& \leq (2\alpha+4\gamma + 4\alpha\gamma) \Wasserstein^2(\Pn,M^*) 
+ (2\alpha + 2\gamma + 4\alpha\gamma)\Omega(M^*) - (2+4\gamma)\Omega(M) -2\Omega(P).
\end{align}

The conclusion is immediate by noting that $\Omega$ is a non-negative function.
\end{proof}

\comment{
\begin{proof}
As before the optimal transport plan from $\Pn$ to $Q$ is
such that each support point $s\in S$ is received by all $x \in \Xset$
nearest to $s$ compared to other points in $S$. So,

\begin{align}
&\Wasserstein^2(\Pn,Q) + \Omega(M)
= \sum_{j=1}^{m} \Wasserstein^2(M_j,\Pn_j) + \Omega(M)
\nonumber  \leq \alpha \sum_{j=1}^{m} (\Wasserstein^2(M_j^*,\Pn_j) + \Omega_j(M_j^*)) \\
& \leq \alpha (\Wasserstein^2(M^*,\Pn) + \Omega(M^*)).
\label{imb}
\end{align}

The second to last inequality is due to the $\alpha$-approximation of (regularized) $\wkmeans_1$,
and condition that $|\textrm{supp}(M_j)| \geq |\textrm{supp}(M_j^*)|$.
The last inequality follows from Proposition \ref{prop:relateOpt} and the definition of
$\Omega$.
By the triangle inequality of $\Wasserstein$, as before

\begin{align}
    \Wasserstein(\Pn,P) 
    & \leq \Wasserstein(\Pn,Q) + \Wasserstein(Q, P) \leq \Wasserstein(\Pn,Q) +
     \sqrt{\gamma \Wasserstein^2(Q,M^*) + \gamma \Omega(M^*) - \Omega(P)}  \nonumber\\
     & \leq \Wasserstein(\Pn,Q) +
     \sqrt{2\gamma \Wasserstein^2(\Pn,Q) + 2\gamma \Wasserstein^2(\Pn,M^*) + \gamma \Omega(M^*) - \Omega(P)}  \nonumber.
\end{align}

Hence, by Cauchy-Schwarz and combining with~\eqref{imb} we obtain

\begin{align}
\Wasserstein^2(\Pn,P)
& \leq 2 \biggr \{(1+2\gamma)\Wasserstein^2(\Pn,Q) + 2\gamma\Wasserstein^2(\Pn,M^*)+
\gamma\Omega(M^*)-\Omega(P)\biggr \} \\
& \leq (2\alpha+4\gamma + 4\alpha\gamma) \Wasserstein^2(\Pn,M^*) + (2\alpha + 2\gamma + 4\alpha\gamma)\Omega(M^*) -
(2+4\gamma)\Omega(M) -2\Omega(P).
\end{align}
\end{proof}
}

If both subproblems for regularized $k$-means can be solved optimally, our method yields
a 10-approximation on the penalized $\Wasserstein^2$ objective.
We conclude by noting that our technique extends easily 
to the $\Wassersteinp^p$ objective for any $p\geq 1$, but the approximation ratio 
will be changed according to $p$.

\section{Leveraging structures in relational data}
\label{sec:structures}

We now explain the ``relational'' part of the Rk-means algorithm, where we
exploit relational structures in the data and the FEQ to 
achieve significant computational savings.
Three classes of relational structures prevalent in RDBMSs are
(a) {\it categorical variables}, (b) 
{\it functional dependencies} (FDs),
and
(c) the topology of the FEQ.
We exploit these structures to carefully select the partition $S_1 \cup \cdots
\cup S_m$ to use for Rk-means, to compute the marginal sub-problems $(\bm X_j, w_j)$,
the components $\bm C_j$ of the coreset $\bm G$, and the grid weight
$w_{\textsf{grid}}$ {\em without} materializing the entire coreset $\bm G$.
When selecting partitions, there are two
competing criteria:
first, we need a partition so that the approximation ratio $\alpha$ 
for $\wkmeans_1$ is as small as possible.  For example, if 
$|S_j|=1$ for all $j$, so $m=d$, then we can apply the
well-known optimal solution for $k$-means in $1$ dimension using dynamic
programming in $O(n^2k)$ time~\cite{wang2011ckmeans}; this then provides
$\alpha=1$.  On the other hand, we want the remainder of algorithm to be fast by
keeping the size of the grid $\bm G$, namely $|\bm G| \le \kappa^m$, small.

\subsection{Categorical variables}

Real-world relational database queries typically involve many categorical
variables (e.g., {\small \sf color}, {\small \sf month}, or {\small \sf city}).
In practice, practitioners may endow non-uniform weights for different
categorical variables, or categories \cite{Huang-1998}. In terms of 
representation, the most common way to deal with categorical variables 
is to one-hot encode them, whereby a categorical feature such as ${\small \sf
city}$ is represented by an indicator vector
\begin{align}
\label{catfeature}
    \bm x_{\city} = \begin{bmatrix}
    \bm 1_{\city=c_1} &
    \bm 1_{\city=c_2} &
    \cdots
    \bm 1_{\city=c_L}
\end{bmatrix}
\end{align}
where $\{c_1,\dots,c_L\}$ is the set of cities occuring in the data.
The subspace associated with these indicator vectors is known as the {\em
categorical subspace} of a categorical variable.
This one-hot representation substantially increases the data matrix size via
an increase in the {\em dimensionality} of the data.
For example, a dataset of about $30$ mostly categorical features with hundreds
or thousands of categories for each feature will have its dimensionality
exploded to the order of thousands with one-hot encoding.
Fortunately, this is not a problem --- by treating each categorical variable 
as a subset of the partition, the \emph{weighted} $k$-means subproblem within 
a categorical subspace is solvable efficiently and optimally.  

This optimal solution can be computed in the same time it takes to find the
number of points in each category, which is a vast improvement on either an
optimal dynamic program or Lloyd's algorithm.  Furthermore, it helps keep $m$ as
low as the number of database attributes in the query.

Consider a weighted $k$-means subproblem solved by $\wkmeans_1$ defined on a
categorical subspace induced by a categorical feature $K$ that has $L$
categories.  Then, the instance is of the form $(\bm I, v)$, where $\bm I$ is
the collection of $L$ indicator vectors $\bm 1_e$, one for each element $ e\in
\dom(K)$.  (One can think of $\bm I$ as the
identity matrix of order $L$.)  Define the weight function $v$ as
\begin{align}
    v(\bm 1_e) = \sum_{\bm x \in \bm X, \bm x_K=e} w(\bm x).
\end{align}

For any set $F \subseteq \dom(K)$,
let $\bm v_F$ denote the vector $(v(\bm 1_e))_{e\in F}$.
Also, $\|\bm v_F\|_1$ and $\|\bm v_F\|_2$ denote
the $\ell_1$ and $\ell_2$ norm, respectively.
It is useful to rewrite the categorical weighted $k$-means problem:

\bprop \label{prop:min:max}
The categorical weighted $k$-means instance $(\bm I,v)$
admits the following optimization objective:
\begin{align}
    \opt(\bm I,v) =
        \norm{\bm v}_1 -
        \max_{\calF}~
        \sum_{F \in \calF} \frac{\norm{\bm v_{F}}_2^2}{\norm{\bm v_{F}}_1}, \label{eqn:catobj}
\end{align}
where $\calF$ ranges over all partitions of $\dom(K)$ into $k$ parts.
\eprop
\bp
First, consider a subset $F \subseteq \dom(K)$ of the categories; the
centroid $\bm \mu$ of (weighted) indicator vectors $\bm 1_e$, $e \in F$, can be written
down explicitly:
\begin{align}
    \mu_e &= \begin{cases}
    0 & e \notin F\\
    \frac{v_e}{\norm{\bm v_F}_1} & v \in F,
    \label{eqn:mu}
\end{cases}
\end{align}
The weighted sum of squared distances between $\bm 1_e$ for all $e\in F$ to $\bm \mu$ is
\begin{align*}
   \sum_{e \in F}(\norm{\bm \mu}_2^2 - \mu_e^2 + (\mu_e-1)^2)v_e
    &=\frac{\norm{\bm v_{F}}_2^2}{\norm{\bm v_{F}}_1} + \sum_{e \in F} ( (\mu_e-1)^2- \mu_e^2)v_e \\
       &=\frac{\norm{\bm v_{F}}_2^2}{\norm{\bm v_{F}}_1} + \sum_{e \in F} ( -2\mu_e +1)v_e \\
    & = \norm{\bm v_{F}}_1 -\norm{\bm v_{F}}_2^2/\norm{\bm v_{F}}_1.
\end{align*}
Thus, the weighted $\kmeans$ objective takes the form
\begin{align}
     \min_{\calF}
        \sum_{F \in \calF} \left( \norm{\bm v_{F}}_1 - \norm{\bm
        v_{F}}_2^2/\norm{\bm v_{F}}_1 \right) 
        &= \norm{\bm v}_1 - \max_{\calF}~
        \sum_{F \in \calF} \norm{\bm v_{F}}_2^2/\norm{\bm v_{F}}_1,
\end{align}
which concludes the proof.
\ep

In~\eqref{eqn:catobj}, note that $\norm{\bm v}_1$ is the total weight of input points;
hence, we can equivalently solve the inner maximization  problem.
With the categorical weighted $k$-means objective in place,
we can derive the optimal clustering. To do so,
We next need the following elementary lemma.

\begin{lmm}
Suppose that $x, a_1,a_2,b_1,b_2 > 0$, $b_1^2 \geq a_1$, $b_2^2 \geq a_2$
and $x \geq \max\{a_1/b_1,a_2/b_2\}$.
Then $x+ \frac{a_1+a_2}{b_1+b_2} \geq \max \biggr \{\frac{x^2+a_1}{x+b_1} +
\frac{a_2}{b_2},
\frac{x^2+a_2}{x+b_2} + \frac{a_1}{b_1}\biggr \}.$
\label{lem:aux}
\end{lmm}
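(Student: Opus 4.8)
The plan is to reduce to a single scalar inequality by symmetry and then to an algebraic factorization. Swapping the roles of the pairs $(a_1,b_1)$ and $(a_2,b_2)$ leaves every hypothesis unchanged and merely exchanges the two expressions inside the $\max$, so it suffices to prove
\[
x + \frac{a_1+a_2}{b_1+b_2} \;\geq\; \frac{x^2+a_1}{x+b_1} + \frac{a_2}{b_2}.
\]

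First I would rewrite both sides as single fractions. Using $x - \frac{x^2+a_1}{x+b_1} = \frac{xb_1-a_1}{x+b_1}$ and $\frac{a_1+a_2}{b_1+b_2} - \frac{a_2}{b_2} = \frac{a_1 b_2 - a_2 b_1}{b_2(b_1+b_2)}$, the target becomes
\[
\frac{xb_1-a_1}{x+b_1} \;\geq\; \frac{a_2 b_1 - a_1 b_2}{b_2(b_1+b_2)}.
\]
The hypothesis $x \geq a_1/b_1$ makes the left side nonnegative, so the inequality is trivial unless $a_2 b_1 > a_1 b_2$; in that case both sides are positive, every denominator is positive, and cross-multiplying reduces the lemma to the polynomial inequality $D \geq 0$, where
\[
D := x\bigl(b_1^2 b_2 + b_1 b_2^2 + a_1 b_2 - a_2 b_1\bigr) - a_1 b_2^2 - a_2 b_1^2 .
\]

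The heart of the argument, and the step I expect to be the only real obstacle, is to bound $D$ from below by something manifestly nonnegative; the right idea is to split on whether $x \leq b_2$ or $x \geq b_2$ and, crucially, to use a different one of the two available bounds on $a_2$ in each branch. When $x \leq b_2$, the coefficient of $a_2$ in $D$ is negative, so substituting $a_2 \leq x b_2$ (from $x \geq a_2/b_2$) only decreases $D$, and a short expansion collapses the result to $b_2\,(b_2-x)\,(xb_1-a_1)$, a product of two nonnegative factors times $b_2>0$. When $x \geq b_2$, substituting instead $a_2 \leq b_2^2$ (the hypothesis $b_2^2 \geq a_2$) collapses $D$ to $b_2\,(b_1^2+a_1)\,(x-b_2) \geq 0$. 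In either case $D \geq 0$, which finishes the proof of the chosen inequality and hence, by symmetry, of the lemma. The hypothesis $b_1^2 \geq a_1$ is not needed for this half; it enters symmetrically once the indices are swapped. All that then remains is the two routine expansions verifying the stated factorizations.
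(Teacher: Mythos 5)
Your proof is correct, and its skeleton is the same as the paper's: reduce by the $(a_1,b_1)\leftrightarrow(a_2,b_2)$ symmetry to the single inequality $x+\tfrac{a_1+a_2}{b_1+b_2}\geq\tfrac{x^2+a_1}{x+b_1}+\tfrac{a_2}{b_2}$, rewrite it as $\tfrac{xb_1-a_1}{x+b_1}\geq\tfrac{a_2b_1-a_1b_2}{b_2(b_1+b_2)}$, and clear denominators to the same polynomial inequality (your $D\geq 0$ is exactly the paper's $x(b_1+b_2+a_1/b_1-a_2/b_2)\geq a_1b_2/b_1+a_2b_1/b_2$ multiplied by $b_1b_2$). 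The only divergence is the endgame: the paper splits on whether $a_1/b_1\geq a_2/b_2$ or not and lower-bounds $x$ by the relevant ratio (using $b_2^2\geq a_2$ to control the sign of the coefficient in the second case), whereas you split on $x\lessgtr b_2$ and, since the coefficient of $a_2$ in $D$ is $-(xb_1+b_1^2)<0$ unconditionally, substitute the upper bound $a_2\leq xb_2$ or $a_2\leq b_2^2$ respectively; I checked that both factorizations, $b_2(b_2-x)(xb_1-a_1)$ and $b_2(b_1^2+a_1)(x-b_2)$, are as claimed, so each branch closes. One small remark: the negativity of the coefficient of $a_2$ has nothing to do with the branch $x\leq b_2$ (it is always negative), so the phrase tying the two together is unnecessary, though harmless. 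Your observation that $b_1^2\geq a_1$ is unused in this half and enters only after swapping indices is consistent with the paper's argument as well; overall your case analysis is arguably a bit cleaner since each branch ends in a manifest product of nonnegative factors.
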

\begin{proof}
  It suffices to establish
  $x+ \frac{a_1+a_2}{b_1+b_2} \geq \frac{x^2+a_1}{x+b_1} + \frac{a_2}{b_2}$,
  or equivalently
       \[x- \frac{x^2+a_1}{x+b_1} \geq
        \frac{a_2}{b_2} - \frac{a_1+a_2}{b_1+b_2},\]

which can be simplified as
\begin{equation}
x(b_1+b_2+a_1/b_1-a_2/b_2) \geq a_1b_2/b_1 + a_2b_1/b_2.
\end{equation}
To verify this inequality, consider two cases. If $a_1/b_1 \geq a_2/b_2$, then
$LHS \geq x(b_1+b_2) \geq (a_2/b_2)b_1 + (a_1/b_1)b_2$.
On the other hand, if $a_2/b_2 > a_1/b_1$. Since $b_2-a_2/b_2 \geq 0$,~
\begin{align*}
LHS & \geq (a_2/b_2)(b_1+b_2+a_1/b_1 - a_2/b_2) \\
& = a_2b_1/b_2 + a_2 + a_1a_2/(b_1b_2) - a_2^2/b_2^2 \\
& = a_2b_1/b_2 + a_1b_2/b_1 + (b_2-a_2/b_2)(a_2/b_2-a_1/b_1) \\
& \geq a_2b_1/b_2 + a_1b_2/b_1.
\end{align*}
Thus the proof is complete.
\end{proof}

Then, the optimal solution to the categorical $k$-means instance is an immediate
consequence:

\begin{cor} \label{cor:struc}
Let $(e_1,\dots,e_L)$ be a permutation of $\dom(K)$ such that
$v_{e_1} \geq v_{e_2} \geq \ldots \geq v_{e_L}$.
Then for any $k \geq 2$ and any
$k$-partition $\calF$ of $\dom(K)$, there holds
\begin{equation*}
v_{e_1}+\ldots+v_{e_{k-1}} + \frac{\sum_{i=k}^{L} v_i^2}{\sum_{i=k}^{L} v_i}
\geq \sum_{F \in \calF} \frac{\norm{\bm v_{F}}_2^2}{\norm{\bm v_{F}}_1}.
\end{equation*}
\end{cor}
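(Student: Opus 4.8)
The plan is to recast the corollary as an optimality statement. For a nonempty $F \subseteq \dom(K)$ write $g(F) := \norm{\bm v_F}_2^2 / \norm{\bm v_F}_1$, so that by Proposition~\ref{prop:min:max} the quantity on the right-hand side of the corollary is $\sum_{F \in \calF} g(F)$ for an arbitrary $k$-partition $\calF$ of $\dom(K)$. Since $g(\{e\}) = v_e$, the left-hand side is precisely $\sum_{F \in \calF^{*}} g(F)$ for the explicit partition $\calF^{*} = \{\{e_1\}, \dots, \{e_{k-1}\}, \{e_k, \dots, e_L\}\}$. Thus the corollary is equivalent to the assertion that $\calF^{*}$ maximizes $\sum_F g(F)$ over all $k$-partitions. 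The one structural fact I would record up front is that $g(F) = \sum_{e \in F} v_e \cdot v_e / \norm{\bm v_F}_1$ is a convex combination of $\{v_e : e \in F\}$, hence $g(F) \le \max_{e \in F} v_e$.

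I would prove optimality of $\calF^{*}$ by induction on $k$, the case $k = 1$ being trivial. For the inductive step, the first move is to show that the heaviest element $e_1$ may be assumed to form a singleton block of an optimal $k$-partition. If some optimal $\calF$ instead has $e_1$ inside a block $F$ with $|F| \ge 2$, pick any other block $F_2$ (it exists because $k \ge 2$), put $F_1 := F \setminus \{e_1\}$, and replace the pair $\{F, F_2\}$ by $\{\{e_1\}, F_1 \cup F_2\}$. Lemma~\ref{lem:aux}, applied with $x = v_{e_1}$, $(a_1, b_1) = (\norm{\bm v_{F_1}}_2^2, \norm{\bm v_{F_1}}_1)$ and $(a_2, b_2) = (\norm{\bm v_{F_2}}_2^2, \norm{\bm v_{F_2}}_1)$, gives $g(\{e_1\}) + g(F_1 \cup F_2) \ge g(F) + g(F_2)$, so this single swap does not decrease the objective and still produces $k$ blocks. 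Once $e_1$ is isolated, the remaining $k-1$ blocks form a $(k-1)$-partition of $\{e_2, \dots, e_L\}$; since this list is still sorted in nonincreasing order of weight, the inductive hypothesis forces the optimal choice to be $\{\{e_2\}, \dots, \{e_{k-1}\}, \{e_k, \dots, e_L\}\}$, and together with the singleton $\{e_1\}$ this is exactly $\calF^{*}$.

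What remains is routine verification of the hypotheses of Lemma~\ref{lem:aux} in the swap step: positivity of $x, a_i, b_i$ is immediate since weights are positive and blocks are nonempty; $b_i^2 \ge a_i$ is the trivial inequality $(\sum_{e \in F_i} v_e)^2 \ge \sum_{e \in F_i} v_e^2$; and $x \ge \max\{a_1/b_1, a_2/b_2\}$ holds because $a_i/b_i = g(F_i) \le \max_{e \in F_i} v_e \le v_{e_1} = x$, using that $e_1$ has maximal weight. The only place that warrants attention---and where I expect the argument to be most delicate to state cleanly---is the exchange step itself: one must check that it really is a single swap (after it, $e_1$ is already a singleton, so no iteration or termination bookkeeping is needed), that $F_1 \cup F_2$ does not secretly reabsorb $e_1$ (it cannot, since $e_1 \notin F_1$ and $e_1 \notin F_2$), and that the induction is invoked on the correctly sorted subproblem. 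Everything else is bookkeeping, with Lemma~\ref{lem:aux} doing the real work.
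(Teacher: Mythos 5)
Your proof is correct and takes essentially the same route as the paper's: induction on $k$ with an exchange step that isolates $e_1$ into a singleton block — replacing the pair $(F, F_2)$ by $(\{e_1\}, (F\setminus\{e_1\})\cup F_2)$ and invoking Lemma~\ref{lem:aux} with exactly the same parameter assignment — followed by induction on the tail. The only difference is that you explicitly verify the lemma's hypotheses (positivity, $b_i^2\ge a_i$, and $x\ge\max\{a_1/b_1,a_2/b_2\}$ via the convex-combination bound $g(F_i)\le\max_{e\in F_i}v_e$), which the paper leaves implicit.
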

\begin{proof}
We prove the claim by induction on $k$. Let $F \in \calF$ be the set containing the
element $\{e_1\}$.  If there is only one element in $F$ then we apply the induction
hypothesis on the remaining terms. Otherwise, $F$ contains at least two elements.
Let $G \in \calF$ be an arbitrary element of $\calF$ where $G \neq F$.
Define $\calF'$ to be the partition obtained from $\calF$ by replacing
$(F,G)$ with $(\{e_1\}, F\cup G-\{e_1\})$. Then,  Lemma~\ref{lem:aux} can be applied
to get
\begin{eqnarray*}
\sum_{F\in \calF}\frac{\norm{\bm v_{F}}_2^2}{\norm{\bm v_{F}}_1}
\leq
\sum_{F\in \calF'}\frac{\norm{\bm v_{F}}_2^2}{\norm{\bm v_{F}}_1}.
\end{eqnarray*}
Induction on the tail $k-1$ terms completes the proof.
\end{proof}

Theorem~\ref{thm:categorical} below follows trivially from the above corollary.
Corollary~\ref{cor:struc} and the objective for $k$-means on a single attribute
in the equation of Proposition~\ref{prop:min:max} establishes precisely the structure of the 
optimal solution for data consisting of a single categorical variable.  

\bthm
Given a categorical weighted $k$-means instance, an optimal solution can
be obtained by putting each of the first $k-1$ highest weight indicator vectors
in its own cluster, and the remaining vectors in the same cluster.
\label{thm:categorical}
\ethm

This means that for a categorical variable with $L$ categories, we can compute the 
optimal clustering for the sub-problem in only $O(nL \log L)$ time.
The variable gives rise to a categorical subspace of size $|S_j| = L$.

\subsection{Functional dependencies}

Next, we address the second call to $\wkmeans_2$: its runtime is dependent on the
size of the grid $\bm G$, which can be up to $O(k^m)$, where $m$ is the number of
features from the input.
Databases often contain {\it functional dependencies} (FDs), which we can
exploit to reduce the size of $\bm G$.  An FD is a dimension whose value depends
entirely on the value of another dimension. For example, for a retailer dataset
that includes geographic information, 
one might encounter features such as
{\small \sf storeID}, {\small \sf zip}, {\small \sf city}, {\small \sf state},
and {\small \sf country}. Here, {\small \sf storeID} functionally determines
{\small \sf zip}, which determines {\small \sf city}, which in turns determines
{\small \sf state}, leading to {\small \sf country}.  This common structure is
known as an {\em FD-chain}, and appears often in real-world FEQs.

If we were to apply Rk-means without exploiting the FDs, the features {\small
\sf storeID}, {\small \sf zip}, {\small \sf city}, {\small \sf state}, and
{\small \sf country} would contribute a factor of $k^5$ to the grid size.
However, by using the FD structure of the database, we show
that only a factor of $5k$ is contributed to the grid size, because most of the
$k^5$ grid points $\bm g$ have $\wgrid(\bm g) = 0$ (see~\eqref{eqn:wg}).
More generally, whenever there is an {\em FD chain} 
including $p$ features, their overall contribution to the grid size
is a factor of $O(kp)$ instead of $O(k^p)$, and the grid points with non-zero
weights can be computed efficiently in time $O(kp)$.

\blmm
Suppose all $d$ input features are categorical and form an FD-chain. Then, the
total number of grid points $\bm g \in \bm G$ with non-zero $\wgrid$ weight is
at most $d(k-1)+1$.
\label{lmm:fd:chain}
\elmm
\bp
Suppose the features are $K_1,\dots,K_d$, where $K_i$ functionally
determine $K_{i+1}$, and
$\dom(K_i) = \{e^i_1, e^i_2, \cdots, e^i_{n_i}\}$. Without loss
of generality, we also assume that the elements in $\dom(K_i)$ are
sorted in descending order of weights:
\begin{align}
    w(\bm 1_{e^i_1}) \geq
    w(\bm 1_{e^i_2}) \geq
    \cdots
    \geq
    w(\bm 1_{e^i_{n_i}}).
\end{align}
From Corollary~\ref{cor:struc}, we know the set $\bm C_i$ of $k$ centroids of each of the
categorical subspace for $K_i$: there is a centroid
$\bm \mu^i_j = \bm 1_{e^i_j}$
for each $j \in [k-1]$, and then a centroid $\bm \mu^i_{k}$ of the rest of the indicator vectors.
The elements $e^i_j$ for $j \in [k-1]$ shall be
called {``heavy''} elements, and the rest are ``light'' elements.

Now, consider an input vector $\bm x = (x_1,\dots,x_d)$ where
$x_i \in \dom(K_i)$. Under one-hot encoding, this vector is mapped to a vector of indicator vectors $\bm 1_{\bm x} := (\bm 1_{x_1}, \cdots, \bm 1_{x_d})$. We need to answer the question: which grid point in $\bm G = \bm C_1 \times \cdots \times \bm C_d$ is $\bm 1_{\bm x}$ closest to? Since the $\ell_2^2$-distance is decomposable into component sum, we
can determine the closest grid point by looking at the
closest centroid in $\bm C_i$ for $\bm 1_{x_i}$, for
each $i\in [d]$.

If $x_i \in \{e^i_1,\dots, e^i_{k-1}\}$, then the corresponding one-hot-encoded version $\bm 1_{x_i}$
{\em is} itself one of the centroids in $\bm C_i$, and thus it is
its own closest centroid. Otherwise, the closest centroid to
$\bm 1_{x_i}$ is $\bm \mu^i_k$, because $\norm{\bm 1_{x_i}- \bm\mu^i_k}^2 < 2$, and $\norm{\bm 1_{x_i}- \bm \mu^i_j}^2=2$ for every $j \in [k-1]$.

Let $\bm\mu^i(x_i) \in \bm C_i$ denote the closest centroid in $\bm C_i$ to $\bm 1_{\bm x_i}$. The closest grid point to $\bm 1_{\bm x}$ is completely determined:
$\bm g = (\bm\mu^1(x_1), \cdots, \bm \mu^d(x_d))$.
Furthermore, let $i \in [d]$ denote the smallest index such that $x_i$ is heavy. Then, we can write $\bm g$ as
\begin{align}
    \bm g = (\bm\mu^1_k, \cdots, \bm\mu^{i-1}_k,
    \bm 1_{x_i}, \bm\mu^{i+1}(x_{i+1}), \cdots,
    \bm \mu^{d}(x_d))
\end{align}
Note that once $x_i$ is fixed, due to the FD-chain the {\em entire} suffix $(\bm 1_{x_i}, \bm\mu^{i+1}(x_{i+1}), \cdots,
\bm \mu^{d}(x_d)$ of $\bm g$ is determined. Hence, the number
of different $\bm g$s can only be at most $d(k-1)+1$:
there are $d+1$ choices for $i$ (from $0$ to $d$), and $k-1$ choices for $x_i$ if $i >0$.
\ep

Theorem~\ref{thm:fds:coreset} below follows trivially from the above lemma, because
the $\ell_2^2$-distance is the sum over the $\ell_2^2$-distances of the
subspaces.

\bthm
Suppose all $d$ input features can be partitioned into $m$ FD-chains of size
$d_1, \cdots, d_m$, respectively. Then, the number of grid points $\bm g \in \bm
G$ with non-zero $\wgrid$ weight is bounded by $\prod_{i=1}^m (1+d_i(k-1))$.
Furthermore, the set of non-zero weight grid points can be computed in time
$\tilde O(\prod_{i=1}^m (1+d_i(k-1)))$.
\label{thm:fds:coreset}
\ethm

Note that in the above theorem, if there was {\em no} FD, then $d$ features each form
their own chain of size $1$, in which case $\prod_{i=1}^m (1+d_i(k-1)) = k^m$;
thus, the theorem strictly generalizes the no-FD case.

\subsection{Query structure}

Finally, we explain how the FEQ's structure can be exploited to speed up the computation of
subproblems, the grid, and grid weights. In particular, we make use of recent advances in
relational query evaluation algorithms~\cite{DBLP:journals/sigmod/Khamis0R17,
DBLP:conf/pods/KhamisNR16,
DBLP:conf/pods/000118,
DBLP:journals/sigmod/OlteanuS16}. The $\InsideOut$ algorithm from the $\faqs$ framework in
particular~\cite{DBLP:journals/sigmod/Khamis0R17} allows us to compute the grid weights
without explicitly the grid points.

For concreteness, we describe the steps of Rk-means as implemented in the database, noting
the additional speedups we can get over the description in
Algorithm~\ref{alg:kmeans}.

\noindent \textbf{Step 1} (lines 5 and 6).  {\it Project $\Xset$ into each
subspace $S_j$ and compute the weight $w$ of each point.}

In a relational database, the projected sets $\Xset_j$ already exist in normalized 
form~\cite{DBLP:books/aw/AbiteboulHV95}, and thus they and their marginal weights
can be computed highly efficiently.
This step perfectly aligns with 
our strategy of picking the partition $S_1 \cup \cdots \cup S_m$ to match the database schema!

\noindent \textbf{Step 2} (line 7).  {\it Find $\kappa$ centroids in each
subspace $S_j$.}

If the subspace $S_j$ corresponds to a single continuous variable, we can solve
the one-dimensional $k$-means problem quickly and
optimally~\cite{wang2011ckmeans}, and if the subspace corresponds to a
categorical feature, then it is solved trivially (and optimally) using
Theorem~\ref{thm:categorical}.

\noindent \textbf{Step 3} (lines 9 and 10).  {\it Construct the coreset
$\bm G$ and the associated weights $\wgrid$.}

When constructing $\bm G$, it is unnecessary to represent any points in $\bm G$
that have zero weight.  We use \InsideOut~\cite{DBLP:conf/pods/KhamisNR16} to efficiently compute nonzero
weights, and then extract only those grid points in $\bm G$ with nonzero weight
from the database.

\noindent \textbf{Step 4} (line 11).  {\it Cluster the weighted coreset $\bm
G$.}

We use a modified version of Lloyd's weighted $k$-means that
exploits the structure of $\bm G$ and sparse representation of categorical
values to speed up computation

We discuss the optimization and acceleration of Step 4 of the Rk-means
implementation in more details here.  Recall that the
categorical subspace $\kmeans$ problem is solved trivially using
Theorem~\ref{thm:categorical}, where we sort all the weights, and the heaviest
$k-1$ elements form their own centroid, while the remaining vectors are
clustered together (the ``light cluster'').

If $S_j$ is a categorical subspace corresponding to a categorical variable $K$ where
$\dom(K) = \{e_1,\dots,e_L\}$. Without loss of generality, assume $w(\bm
1_{e_1}) \geq \cdots \geq w(\bm 1_{e_L})$, then the centroid of the light
cluster is an $L$-dimensional vector $\bm c = (s_e)_{e \in \dom(K)}$
\begin{align}
    s_{e_i} := \begin{cases}
        0 & i \in [k-1] \\
        \frac{w(\bm 1_{e_i})}
        {\sum_{j=k}^L w(\bm 1_{e_j})} & i \geq k
    \end{cases}
\end{align}
This encoding is sound and space-inefficient.

Remember also that Step 4 clusters the coreset $\bm G$ using a modified version
of Lloyd's weighted $k$-means that exploits the structure of $\bm G$ and sparse
representation of categorical values. We show how to improve the distance
computation $\norm{\bm c_j- \bm \mu_j}^2$ for sub-space $S_j$, where $\bm c_j$
and $\bm \mu_j$ are the $j$-th components of a grid point and respectively of a
centroid for $\bm G$. Since this subspace corresponds to a categorical variable
$K$ with, say, $L_j$ categories, it is mapped into $L_j$ sub-dimensions. Let
$\bm c_j=[s_1,\ldots,s_{L_j}]$ and $\bm \mu_j=(t_1,\ldots,t_{L_j})$.  Using the
explicit one-hot encoding of its categories, we would need $O(L_j)$ time to
compute $\norm{\bm c_j-\bm \mu_j}^2=\sum_{\ell\in [L_j]} (s_\ell - t_\ell)^2$.
We can instead achieve $O(1)$ time as shown next.  There are $k$ distinct values
for $\bm c_j$ by our coreset construction, each represented by a vector of size
$L_j$ with one non-zero entry for $k-1$ of them and $L_j-k+1$ non-zero entries
for one of them.

If $\bm c_j = \bm 1_e$ is an indicator vector for some element $e \in K$
($e$ is one of the $k-1$ heavy categories), then
\begin{align}
    \norm{\bm c_j-\bm\mu_j}^2 &=\norm{\bm 1_e-\bm\mu_j}^2 = 1-2t_e +\norm{\bm\mu_j}^2.
    \label{eqn:one}
\end{align}
If $\bm c_j$ is a light cluster centroid,
\begin{align}
    \norm{\bm c_j-\bm\mu_j}^2 &= \norm{\bm c_j}^2+\norm{\bm \mu_j}^2-2\inner{\bm
    c_j,\bm\mu_j}.
    \label{eqn:two}
\end{align}
In~\eqref{eqn:one}, by pre-computing $\norm{\bm\mu_j}^2$ we only spend $O(1)$-time
per heavy element $e$.
In~\eqref{eqn:two}, by also pre-computing $\norm{\bm c_j}^2$ and $\inner{\bm c_j, \bm\mu_j}$,
and by noticing that $\bm c_j$ is $(L_j-k+1)$-sparse, we spend $O(L_j-k)$-time here.
Overall, we spend time $O(L_j)$ for computing $\norm{\bm c_j-\bm\mu_j}^2$
per categorical dimension, modulo the precomputation time.

Step 4 thus requires $O(|\bm G|mk+\sum_{j\in[m]} L_jk) = O(|\bm G|mk + Dkm)$ per
iteration, whereas a generic approach would take time
$O(\sum_{j\in[m]}|\bm G|kL_j)=O(|\bm G|Dkm)$. Our modified weighted $k$-means algorithm
thus saves a factor proportional to the total domain sizes of the categorical variables, 
which may be as large as $D$.

\subsection{Runtime analysis}
\label{subsec:runtime}

We compare Rk-means to the standard setting of first extracting the matrix $\Xset$ from the 
database and then perform clustering on $\Xset$ directly.  
The precise runtime statement requires defining a few parameters such as ``fractional
hypertree width'' and ``fractional edge cover number'' of the FEQ, which we briefly covered
in Section~\ref{sec:database_queries}. 
Hence, we state the main thrust of our runtime result:

\bthm
There are classes of feature extraction queries (FEQs) for which the runtime of Rk-means is 
asymptotically less than
$|\bm X|$, and the ratio between $|\bm X|$ and the runtime of Rk-means can be a polynomial in
$N$, the size of the largest input relation.
\label{thm:runtime:rkmeans}
\ethm
\bp[Proof of Theorem~\ref{thm:runtime:rkmeans}]
Let $N$ denote the maximum number of tuples in any input relation of
the FEQ, 
$|\Xset|$ the number of tuples in the data matrix,
$\fhtw$ the {\em fractional hypertree width} of the FEQ 
$t$ the number of iterations of Lloyd's algorithm,
$d$ denote the number of features pre-one-hot encoding,
$r$ number of input relations to the FEQ,
$D$ the real dimensionality of the problem after one-hot-encoding.

We analyze the time complexity for each of the four steps of the Rk-means
algorithm.  

Step 1 projects $\Xset$ into each subspace
$S_j$ and compute the total weight of each projected point:
\begin{align}\label{eq:faq-one-dimension-count}
    \forall j\in[d]: w_j(\bm x_{S_j})
    &:= \sum_{{\bm x}_{[d]\setminus\{S_j\}}} \prod_{F\in \calE}R_F({\bm x}_F)
\end{align}
Each of the $d$ FAQs~\eqref{eq:faq-one-dimension-count} in Step 1
can be computed in time $\tilde O(rd^2N^\fhtw)$ using $\InsideOut$, as we have reviewed in
Section~\ref{sec:database_queries}.

In Step 2, the optimal clustering in each dimension takes time $\tilde O(L_j)$ 
for each categorical variable $j$ (whose domain size is $L_j$,
and $O(kN^2)$ for each continuous variable, with an overall
runtime of $O(kdN^2)$.

Step 3 constructs $\bm G$, whose size is bounded by $|\bm X|$ 
and by the FD result of Theorem~\ref{thm:fds:coreset}.
In practice, this number can be much smaller
since we skip the data points whose weights are zero.
To perform this step we construct a tree decomposition of FEQ with with equal $\fhtw$
(this step is data-independent, only dependent on the size of FEQ).
Then, from each value $x_j$ of an input variable $X_j$, we determine its centroid $c(x_j)$
which was computed in step 2. By conditioning on combinations of $(c_1,\dots,c_j)$,
we can compute $w_{\textsf{grid}}$ one for each combiation in $\tilde O(dN^\fhtw)$-time,
for a total run time of $\tilde O(rd|\bm G|N^\fhtw)$.

Step 4 -- as analyzed in Section~\ref{subsec:runtime} --
clusters $\bm G$ in time $O((|\bm G| + D)kmt)$, where $t$ is the
number of iterations of $k$-means used in Step 4.
The most expensive computation is due to the one-dimensional clustering for the continuous 
variables and the computation of the coreset.

To compare the total runtime with $|\bm X|$, we only need to note that
$|\bm X|$ can be as large as $N^{\rho^*}$, where
$\rho*$ is the {\em fractional edge covering number} of the FEQ's hypergraph
\cite{DBLP:conf/pods/000118}.
Depending on the query, $\rho^*$ is always at least $1$, and can be as large as the number
of features $d$. 
Furthermore, there are classes of queries where $\fhtw$ is bounded by a constant, 
yet $\rho^*$ is unbounded~\cite{DBLP:journals/jacm/Marx13}.
This means, for classes of FEQs where $\rho^* > \max\{\fhtw,2\}$ the ratio between
$|\bm X|$ and Rk-means's runtime will be $\tilde Omega(N^{\rho^* - \max\{\fhtw,2\}}/t)$,
which is unbounded.
\ep

The key insight to read from this theorem is that Rk-means can, in principle, run faster
than simply exporting the data matrix, without even running {\em any} clustering algorithm
(be it sampling-based, streaming, etc.).
Of course, the result only concerns a class of FEQs ``on paper''.
Section~\ref{sec:experiments} examines real FEQs, which also demonstrate Rk-means's runtime
superiosity.

For reference,
we compare the asymptotic runtime of Rk-means to the standard implementation
of Lloyd's algorithm. 
The standard implementation contains two steps: (1) compute the 
one-hot-encoded data
matrix $\bm X$, and (2) run Lloyd's algorithm on $\bm X$. The first step, materializing $\bm
X$, takes time $\tilde O(rd^2 N^\fhtw + D|\bm X|)$. The second step, running Lloyd algorithm,
takes time $\tilde O(tkD|\bm X|)$, as is well known.
Thus, the standard approach takes time $\tilde O(rd^2 N^\fhtw + tkD|\bm X|)$.

\begin{figure}
    \begin{table}[H]
      \begin{tabular}{|l|r|r|r|}\hline 
        & Retailer & Favorita  & Yelp \\\hline
        Relations  & 5 & 6 & 6\\
        Attributes  & 39 & 15 & 25 \\
        One-hot Enc. & 95 & 1470 & 1617 \\\hline
        \# Rows in $\bm D$   & 84M & 125M & 8.7M \\
        Size of $\bm D$     & 1.5GB & 2.5GB & 0.2GB \\\hline
        \# Rows in $\bm X$ & 84M & 127M & 22M \\
        Size of $\bm X$ & 18GB& 7GB & 2.4GB \\\hline
        \multicolumn{4}{|c|}{\# Rows in Coreset $\bm G$}\\\hline
        $\kappa$ = 5  & 1.43M& 14.94K & 2.69M \\
        $\kappa$ = 10 & 9.58M & 85.88K & 11.71M \\
        $\kappa$ = 20 & 38.16M  & 632.5K & 11.89M \\
        $\kappa$ = 50 & 73.75M & 7.87M & 12.46M \\ \hline
      \end{tabular}
      \caption{Statistics for the input database $\bm D$, data matrix
        $\bm X$, and coresets $\bm G$ for the three dataset.}
      \label{tbl:dataset:stats}
    \end{table}
\end{figure}

\section{Experimental results}
\label{sec:experiments}

We empirically evaluate the performance of Rk-means on three real datasets for
three sets of experiments: (1) we break down and analyze the performance of
each step in Rk-means; (2) we benchmark the performance and approximation of
Rk-means against mlpack~\cite{mlpack2018} (v. 3.1.0), a fast C++ machine learning library;
and (3) we evaluate the performance and approximation of Rk-means for setting
$\kappa < k$; i.e., different number of clusters for Steps 2 and 4.

The experiments show that the coresets of Rk-means are often significantly
smaller than the data matrix. As a result, Rk-means can scale easily to large
datasets, and can compute the clusters with a much lower memory footprint than
mlpack.  When $\kappa = k$, Rk-means is orders-of-magnitude faster than the
end-to-end computation for mlpack---up to 115$\times$.  Typically, the
approximation level is very minor.  In addition, setting $\kappa < k$ can lead
to further performance speedups with only a moderate increase in approximation,
giving over 200$\times$ speedup in some cases.

{\bf Experimental Setup.} We prototyped Rk-means as part of an engine
designed to compute multiple FAQ expressions efficiently. Rk-means is implemented
in multithreaded C++11 and compiled with {\small\texttt -O3} optimizations; this
makes mlpack a comparable implementation.  All experiments were performed on an
AWS {\tt x1e.8xlarge} system, which has 1 TiB of RAM and 32 vCPUs.  All
relations given were sorted by their join attributes.

To construct the data
matrix that forms the input to mlpack, we use PostgreSQL ({\small \tt
  psql}) v. 10.6 to evaluate the FEQ.  The seminal $k$-means++
algorithm~\cite{Arthur-2007} is used for initializating the $k$-means cluster.
We run Rk-means and mlpack + {\small\texttt psql}
five times and report the average approximation and runtime.  The timeout for
all experiments was set to six hours (21,600 seconds) per trial.  Our runtime results omit
data loading/saving times.  Note that for mlpack + {\small\texttt psql},
{\small\texttt psql} must export $\bm X$ to disk, and then mlpack must then read
it from disk.  Rk-means has no need to do this, and thus the runtime numbers are
skewed in mlpack's favor.  This skew may be significant: loading and saving a
large CSV file may take hours in some cases.

{\bf Datasets.} We use three real datasets: (1) \textit{Retailer} is used by a
large US retailer for sales forecasting; (2) \textit{Favorita}~\cite{favorita}
is a public dataset for retail forecasting; and (3) \textit{Yelp} is from the
public Yelp Dataset Challenge~\cite{yelpdataset} and used to predict users'
ratings of businesses.  Table~\ref{tbl:dataset:stats} presents key statistics
for the three datasets, including the size of data matrix $\bm X$ and the
coreset $\bm G$ for each dataset and different $\kappa$-values.  $|\bm G|$ is
highly data dependent. For {\it Favorita}, $\bm G$ is orders-of-magnitude
smaller than the data matrix.  For {\it Retailer}, when $k=20$ and $k=50$,
$|\bm G|$ approaches $|\bm X|$, but Rk-means is still able to provide a speedup.
Additional dataset details are given below.

{\it Retailer} has five relations: \textit{Inventory} stores the number of
inventory units for each date, location, and stock keeping unit (sku);
\textit{Location} keeps for each store: its zipcode, the distance to the closest
competitors, and the type of the store; \textit{Census} provides 14 attributes
that describe the demographics of a given zipcode, including population size or
average household income; \textit{Weather} stores statistics about the weather
condition for each date and store, including the temperature and whether it
rained; \textit{Items} keeps track of the price, category, subcategory, and
category cluster of each sku. 

{\it Favorita} has six relations: \textit{Sales} stores the number of units sold
for items for a given date and store, and an indicator whether or not the unit
was on promotion at this time; \textit{Items} provides additional information
about the skus, such as the item class and price; \textit{Stores} keeps
additional information on stores, like the city they are located it;
\textit{Transactions} stores the number of transaction for each date and store;
\textit{Oil} provides the oil price for each date; and \textit{Holiday}
indicates whether a given day is a holiday. The original dataset gave the
{\small\tt units\_sold} attribute with a precision of three decimals
places. This resulted in a very many distinct values for this attribute, which
has a significant impact on the Step 2 of the Rk-means algorithm. We decreased
the precision for this attribute to two decimal places, which decreases the
number of distinct values by a factor of four. This modification has no effect
on the final clusters or their accuracy.

{\it Yelp} has five relations: \textit{Review} gives the review rating that a
user gave to a business and the date of the review; \textit{User} provides
information about the users, including how many reviews the made, when they
join, and how many fans they have; \textit{Business} provides information about
the businesses that are reviewed, such as their location and average rating;
\textit{Category} provide information about the categories, i.e. Restaurant, and
respectively attributes of the business, {\it Attributes} is an aggregated
relation, which stores the number of attributes (i.e., open late) that have
been assigned to a business. A business can be categorized in many ways, which
is the main reason why the size of the join is significantly larger than the
underlying relations.

\begin{figure*}
  \input{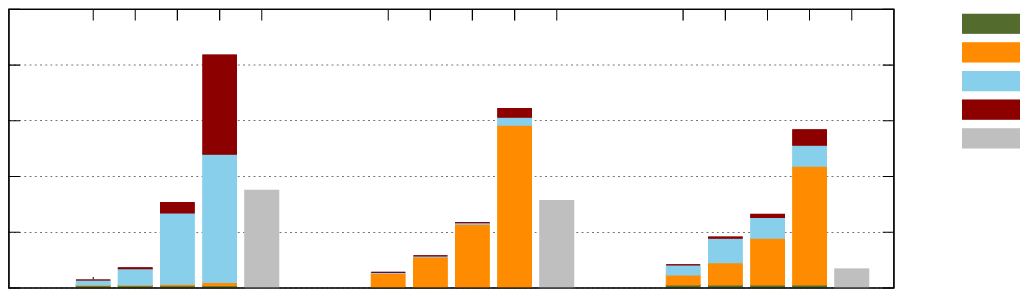}
  \caption{Breakdown of the compute time of Rk-means for each step of the
    algorithm with $\kappa = k$. The time to compute $\bm X$ is provided as
    reference.}
  \label{fig:rkmeans:breakdown}
\end{figure*}

{\bf Breakdown of Rk-means.} Figure~\ref{fig:rkmeans:breakdown}
shows the time it takes Rk-means to cluster the three datasets for different
values of $k$ with $\kappa = k$. The total time is broken down into the four steps of the
algorithm from Section~\ref{sec:structures}. We provide the time it takes
{\small \tt psql} to compute $\bm X$ as reference (gray bar).  In many cases,
Rk-means can cluster {\it Retailer} and {\it Favorita} faster than it takes
{\small \tt psql} to even compute the data matrix. The relative performance of
the four steps is data dependent. For {\it Retailer}, most of the time is spent
on constructing $\bm G$ in Step 3, which is relatively large. For {\it
Favorita}, however, Step 2 takes the longest, since it contains one continuous
variable with many distinct values, and the DP algorithm for clustering runs in
time quadratic in the number of distinct values. The runtime for {\it Favorita}
could be improved by clustering this dimension with a different $k$-means
algorithm instead, but this may increase the approximation.

{\bf Comparison with mlpack.} The left columns of Table~\ref{table:rkmeans:comparison} compares the
runtime and approximation of Rk-means against mlpack on the three datasets for
different $k$ values with $\kappa = k$.  The approximation is given relative to the
objective value obtained by mlpack.  Speedup is given by comparing the
end-to-end performance of Rk-means and mlpack (ignoring disk I/O time), which for mlpack includes the
time needed by {\small \tt psql} to materialize $\bm X$.  Overall, Rk-means
often outperforms even just the clustering step from mlpack, and when
end-to-end computation is considered, Rk-means gives up to 115$\times$
speedup.  mlpack timed out after six hours for Favorita with $k = 50$.  In
addition, Rk-means has a much smaller memory footprint than mlpack: for
instance, on the {\it Favorita} dataset with $k = 20$, mlpack uses over 900GiB
of RAM to cluster the dataset, whereas Rk-means only requires 18Gib. In our
simulations, the approximation level is moderate, and consistently well below
the $9$-approximation bound from Theorem~\ref{thm:approx:bound}.

\begin{table*}[t!]
\centering \small
\begin{tabular}{|l|r|r|r|r||r|r|}\hline
  \textbf{Retailer}      & k = 5         & k = 10         & k = 20         & k = 50           & k=20, $\kappa$ = 10 & k = 50, $\kappa$ = 20 \\\hline
  Compute $\bm X$ (psql) & 175.47        & 175.47         & 175.47         & 175.47           & 175.47              & 175.47                \\
  Clustering (mlpack)    & 65.41         & 158.81         & 385.67         & 1,453.88         & 385.67              & 1,453.88              \\\hline
  Rk-means               & 15.66         & 54.59          & 230.17         & 650.20           & 63.51               & 344.31                \\\hline
  Relative Speedup       & 15.38$\times$ & 6.12$\times$   & 2.44$\times$   & 2.51$\times$     & 8.84$\times$        & 4.73$\times$          \\
  Relative Approx.       & 0.20          & 0.08           & 0.03           & 0.00             & 0.03                & 0.02                  \\\hline\hline
  \textbf{Favorita}      & k = 5         & k = 10         & k = 20         & k = 50           & k=20, $\kappa$ = 10 & k = 50, $\kappa$ = 20 \\\hline
  Compute $\bm X$ (psql) & 156.86        & 156.86         & 156.86         & 156.86           & 156.86              & 156.86                \\
  Clustering (mlpack)    & 1,002.54      & 6,449.32       & 11,794.49      & \!\!$>$21,600.00 & 11,794.49           & $>$21,600             \\\hline
  Rk-means               & 27.95         & 57.72          & 118.36         & 334.65           & 57.65               & 120.77                \\\hline
  Relative Speedup       & 41.49$\times$ & 114.59$\times$ & 100.98$\times$ & $>$64.55$\times$ & 207.30$\times$      & $>$178.86$\times$     \\
  Relative Approx.       & 2.99          & 0.35           & 0.12           & --               & 1.93                & --                    \\\hline\hline
  \textbf{Yelp}          & k = 5         & k = 10         & k = 20         & k = 50           & k=20, $\kappa$ = 10 & k = 50, $\kappa$ = 20 \\\hline
  Compute $\bm X$ (psql) & 33.83         & 33.83          & 33.83          & 33.83            & 33.83               & 33.83                 \\
  Clustering (mlpack)    & 210.59        & 640.43         & 2,107.83       & 11,474.24        & 2,107.83            & 11,474.24        \\\hline
  Rk-means               & 43.37         & 107.71         & 195.22         & 405.11           & 114.34              & 241.34                \\\hline
  Relative Speedup       & 5.64$\times$  & 6.26$\times$   & 10.97$\times$  & 28.41$\times$    & 18.73$\times$       & 47.68$\times$         \\
  Relative Approx.       & 0.37          & 0.26           & 0.13           & 0.05             & 0.27                & 0.20                  \\\hline
\end{tabular}
\caption{End-to-end runtime and approximation comparison of Rk-means and mlpack
  on each dataset. The first four columns use different $\kappa = k$ values; the
  last two show results for setting $\kappa < k$.}
\label{table:rkmeans:comparison}
\end{table*}

{\bf Setting $\kappa < k$ for Step 2.}  We next evaluate the effect of setting
$\kappa$ to a smaller value than the number of clusters $k$.  This exploits the
speed/approximation tradeoff: smaller $\kappa$ helps reduce
the size of $\bm G$, at the cost of more approximation.
Table~\ref{table:rkmeans:comparison} presents for each dataset the results for
setting $k=20,\kappa = 10$ and $k=50, \kappa=20$, and compares them to the
relative performance and approximation over computing $k$ clusters in mlpack.

By setting $\kappa < k$, Rk-means can compute the $k$ clusters up to $208\times$
faster than mlpack and 3.6$\times$ faster than Rk-means with $\kappa = k$, while
the relative approximation remains moderate.  Our results are data
dependent---but as queries and databases scale, our speedups
will be even more significant.


\section{Conclusion}
\label{sec:conclusion}

We introduce Rk-means, a method to construct $k$-means clustering coresets on
relational data directly from the database.  Rk-means
gives a provably good clustering of the entire
dataset, without ever materializing the data set; this also yields
asymptotic improvements in running time.  Experimentally, we observe that the
coreset has size up to 180x smaller than the size of the data matrix and
this compression results in orders-of-magnitude improvements in the running
time, while still providing empirically good clusterings.  Although our work
here primarily focuses on $k$-means clustering, we believe our construction of 
grid coresets and the accompanied theory may be useful for other unsupervised 
learning tasks and plan to explore such possibilities in future work.


\end{document}